\newtheorem{theorem}{Theorem}
\newtheorem{lemma}{Lemma}
\newtheorem{remark}{Remark}
\newtheorem{corollary}{Corollary}
\def \bP {\mathbb{P}}
\def \bE {\mathbb{E}}
\def \bR {\mathbb{R}}
\newcommand{\stepa}[1]{\overset{\rm (a)}{#1}}
\newcommand{\stepb}[1]{\overset{\rm (b)}{#1}}
\newcommand{\stepc}[1]{\overset{\rm (c)}{#1}}
\newcommand{\stepd}[1]{\overset{\rm (d)}{#1}}
\newcommand{\stepe}[1]{\overset{\rm (e)}{#1}}
\newcommand{\calA}{{\mathcal{A}}}
\newcommand{\calF}{{\mathcal{F}}}
\newcommand{\calI}{{\mathcal{I}}}
\newcommand{\calN}{{\mathcal{N}}}
\newcommand{\calP}{{\mathcal{P}}}
\newcommand{\calT}{{\mathcal{T}}}
\newcommand{\barY}{{\bar{Y}}}
\crefname{lemma}{Lemma}{Lemmas}
\Crefname{lemma}{Lemma}{Lemmas}
\crefname{thm}{Theorem}{Theorems}
\Crefname{thm}{Theorem}{Theorems}
\title{Batched Multi-armed Bandits Problem}
\author{%
  Zijun Gao, Yanjun Han, Zhimei Ren, Zhengqing Zhou \\
  Department of \{Statistics, Electrical Engineering, Statistics, Mathematics\}\\
  Stanford University\\
  \texttt{ \{zijungao,yjhan,zren,zqzhou\}@stanford.edu} \\
}
\begin{document}

\maketitle

\begin{abstract}
In this paper, we study the multi-armed bandit problem in the batched setting where the employed policy must split data into a small number of batches. While the minimax regret for the two-armed stochastic bandits has been completely characterized in \cite{perchet2016batched}, the effect of the number of arms on the regret for the multi-armed case is still open. Moreover, the question whether adaptively chosen batch sizes will help to reduce the regret also remains underexplored. In this paper, we propose the BaSE (batched successive elimination) policy to achieve the rate-optimal regrets (within logarithmic factors) for batched multi-armed bandits, with matching lower bounds even if the batch sizes are determined in an adaptive manner. 
\end{abstract}

\section{Introduction and Main Results}
Batch learning and online learning are two important aspects of machine learning, where the learner is a passive observer of a given collection of data in batch learning, while he can actively determine the data collection process in online learning. Recently, the combination of these learning procedures has been arised in an increasing number of applications, where the active querying of data is possible but limited to a fixed number of rounds of interaction. For example, in clinical trials \cite{thompson1933likelihood,robbins1952some}, data come in batches where groups of patients are treated simultaneously to design the next trial. In crowdsourcing \cite{kittur2008crowdsourcing}, it takes the crowd some time to answer the current queries, so that the total time constraint imposes restrictions on the number of rounds of interaction. Similar problems also arise in marketing \cite{bertsimas2007learning} and simulations \cite{chick2009economic}. 

In this paper we study the influence of round constraints on the learning performance via the following batched multi-armed bandit problem. Let $\calI=\{1,2,\cdots,K\}$ be a given set of $K\ge 2$ arms of a stochastic bandit, where successive pulls of an arm $i\in\calI$ yields rewards which are i.i.d. samples from distribution $\nu^{(i)}$ with mean $\mu^{(i)}$. Throughout this paper we assume that the reward follows a Gaussian distribution, i.e., $\nu^{(i)}=\calN(\mu^{(i)},1)$, where generalizations to general sub-Gaussian rewards and variances are straightforward. Let $\mu^\star = \max_{i\in [K]}\mu^{(i)}$ be the expected reward of the best arm, and $\Delta_i=\mu^\star - \mu^{(i)}\ge 0$ be the gap between arm $i$ and the best arm. The entire time horizon $T$ is splitted into $M$ batches represented by a \emph{grid} $\calT=\{t_1,\cdots,t_M\}$, with $1\le t_1<t_2<\cdots<t_M=T$, where the grid belongs to one of the following categories: 
\begin{enumerate}
	\item Static grid: the grid $\calT=\{t_1,\cdots,t_M\}$ is fixed ahead of time, before sampling any arms; 
	\item Adaptive grid: for $j\in [M]$, the grid value $t_j$ may be determined after observing the rewards up to time $t_{j-1}$ and using some external randomness. 
\end{enumerate}
Note that the adaptive grid is more powerful and practical than the static one, and we recover batch learning and online learning by setting $M=1$ and $M=T$, respectively. A sampling policy $\pi= (\pi_t)_{t=1}^T$ is a sequence of random variables $\pi_t\in [K]$ indicating which arm to pull at time $t\in [T]$, where for $t_{j-1}<t\le t_j$, the policy $\pi_t$ depends only on observations up to time $t_{j-1}$. In other words, the policy $\pi_t$ only depends on observations strictly anterior to the current batch of $t$. The ultimate goal is to devise a sampling policy $\pi$ to minimize the expected cumulative regret (or pseudo-regret, or simply \emph{regret}), i.e., to minimize $\bE[R_T(\pi)]$ where
\begin{align*}
R_T(\pi) \triangleq \sum_{t=1}^T \left( \mu^\star-\mu^{(\pi_t)} \right) = T\mu^\star - \sum_{t=1}^T \mu^{(\pi_t)}. 
\end{align*}

Let $\Pi_{M,T}$ be the set of policies with $M$ batches and time horizon $T$, our objective is to characterize the following \emph{minimax regret} and \emph{problem-dependent regret} under the batched setting: 
\begin{align}
R_{\text{min-max}}^\star(K,M,T) &\triangleq \inf_{\pi \in \Pi_{M,T}}\sup_{\{\mu^{(i)}\}_{i=1}^K: \Delta_i\le \sqrt{K} } \bE[R_T(\pi)], \label{eq.minimax_regret}\\
R_{\text{pro-dep}}^\star(K,M,T) &\triangleq \inf_{\pi \in \Pi_{M,T}}\sup_{\Delta>0}\Delta\cdot \sup_{\{\mu^{(i)}\}_{i=1}^K: \Delta_i\in \{0\}\cup [\Delta,\sqrt{K}]} \bE[R_T(\pi)] .\label{eq.adaptive_regret}
\end{align}
Note that the gaps between arms can be arbitrary in the definition of the minimax regret, while a lower bound on the minimum gaps is present in the problem-dependent regret. The constraint $\Delta_i\le \sqrt{K}$ is a technical condition in both scenarios, which is more relaxed than the usual condition $\Delta_i\in [0,1]$. These quantities are motivated by the fact that, when $M=T$, the upper bounds of the regret for multi-armed bandits usually take the form \cite{vogel1960sequential,lai1985asymptotically,audibert2009minimax,bubeck13boundedRegret,perchet2013multi}
\begin{align*}
\bE[R_T(\pi^1)] &\le C\sqrt{KT},  \\
\bE[R_T(\pi^2)] &\le C\sum_{i\in [K]: \Delta_i>0} \frac{\max\{1,\log(T\Delta_i^2)\}}{\Delta_i},
\end{align*}
where $\pi^1, \pi^2$ are some policies, and $C>0$ is an absolute constant. These bounds are also tight in the minimax sense \cite{lai1985asymptotically,audibert2009minimax}. As a result, in the fully adaptive setting (i.e., when $M=T$), we have the optimal regrets $R_{\text{min-max}}^\star(K,T,T)=\Theta(\sqrt{KT})$, and $R_{\text{pro-dep}}^\star(K,T,T)=\Theta(K\log T)$. The target is to find the dependence of these quantities on the number of batches $M$. 

Our first result tackles the upper bounds on the minimax regret and problem-dependent regret. 
\begin{theorem}\label{thm.upper_bound}
	For any $K\ge 2, T\ge 1, 1\le M\le T$, there exist two policies $\pi^1$ and $\pi^2$ under static grids (explicitly defined in Section \ref{sec.upper_bound}) such that if $\max_{i\in [K]} \Delta_i \le \sqrt{K}$, we have
	\begin{align*}
	\bE[R_T(\pi^1)] &\le \mathsf{polylog}(K,T)\cdot \sqrt{K}T^{\frac{1}{2-2^{1-M}}}, \\
	\bE[R_T(\pi^2)] &\le \mathsf{polylog}(K,T)\cdot \frac{KT^{1/M}}{\min_{i\neq \star}\Delta_i}, 
	\end{align*}
	where $\mathsf{polylog}(K,T)$ hides poly-logarithmic factors in $(K,T)$. 
\end{theorem}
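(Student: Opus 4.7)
My plan is to analyze the BaSE (batched successive elimination) policy: maintain an active set $A_{j-1} \subseteq [K]$ at the start of each batch $j$, pull every arm in $A_{j-1}$ the same number of times inside batch $j$, and at the end of batch $j$ remove arm $i$ from the active set whenever $\max_{k \in A_{j-1}} \hat\mu_k^{(j)} - \hat\mu_i^{(j)} > \tau_j$, where $\hat\mu_i^{(j)}$ is the empirical mean of arm $i$ computed from all its pulls through time $t_j$ and $\tau_j = c\sqrt{K\log T/t_j}$ for a suitable absolute constant $c$. The two policies $\pi^1,\pi^2$ are both instances of BaSE, differing only in the choice of the static grid $t_1<\cdots<t_M=T$.

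A Hoeffding and union bound over $(i,j) \in [K]\times[M]$ show that the clean event $\calE := \{|\hat\mu_i^{(j)} - \mu^{(i)}| \le \sqrt{K\log T/t_j}\text{ for every }(i,j)\text{ with }i\in A_{j-1}\}$ holds with probability at least $1-2/T$, so the complement contributes only $O(\sqrt{K})$ to the expected regret and is absorbed into the polylog factor. On $\calE$, two elementary facts follow: (i) the best arm $\star$ is never eliminated, because $\max_k \hat\mu_k^{(j)} - \hat\mu_\star^{(j)} \le 2\sqrt{K\log T/t_j} \le \tau_j$; and (ii) any arm $i$ with $\Delta_i > (c+2)\sqrt{K\log T/t_j}$ is eliminated by the end of batch $j$. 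In particular, for every $j\ge 2$, every arm surviving into batch $j$ satisfies $\Delta_i = O(\sqrt{K\log T/t_{j-1}})$.

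For $\pi^1$, I take the grid $t_j = \lceil T^{\alpha_j}\rceil$ with $\alpha_j = (2-2^{1-j})/(2-2^{1-M})$, which satisfies the recursion $\alpha_j = \alpha_{j-1}/2 + \alpha_1$ and the boundary $\alpha_M = 1$ so that $t_M=T$. On $\calE$, the regret inside batch $j\ge 2$ is at most $(t_j-t_{j-1})\max_{i\in A_{j-1}}\Delta_i = O(\sqrt{K\log T}\,T^{\alpha_j-\alpha_{j-1}/2}) = O(\sqrt{K\log T}\,T^{\alpha_1})$, while the batch-$1$ regret is $\le t_1\sqrt{K} = \sqrt{K}\,T^{\alpha_1}$ by the assumption $\Delta_i\le\sqrt{K}$. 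Summing over all $M$ batches gives $O(M\sqrt{K\log T}\,T^{1/(2-2^{1-M})})$, i.e. the first bound. For $\pi^2$, I use the geometric grid $t_j=\lceil T^{j/M}\rceil$. For each suboptimal arm $i$, let $j_i$ be the batch after which it is eliminated; fact (ii) yields $t_{j_i-1} = O(K\log T/\Delta_i^2)$ and hence $t_{j_i} \le T^{1/M}\cdot O(K\log T/\Delta_i^2)$. The total number of pulls of arm $i$ is bounded by
\begin{align*}
N_i \;\le\; \sum_{j\le j_i}\frac{t_j-t_{j-1}}{|A_{j-1}|} \;\le\; \frac{t_{j_i}}{\min_{j\le j_i}|A_{j-1}|},
\end{align*}
and the critical point that saves a factor of $K$ is that while $i$ is still active, every arm with smaller gap is also active, so $|A_{j-1}| \ge |\{k:\Delta_k\le\Delta_i\}|$ throughout. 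Ordering the nonzero gaps $\Delta_{(1)}\le\Delta_{(2)}\le\cdots$, the $\ell$-th arm contributes at most $O(T^{1/M}K\log T/(\Delta_{(\ell)}(\ell+1)))$, and the harmonic sum $\sum_\ell 1/(\ell+1) = O(\log K)$ yields the claimed $O(KT^{1/M}\log K\log T/\Delta_{\min})$.

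The main obstacle is the recursion behind the minimax grid. Batch $1$ incurs regret $\sim t_1\sqrt{K}$ since no concentration is yet available, while batches $j\ge 2$ inherit the concentration from batch $j-1$, contributing $\sim(t_j-t_{j-1})\sqrt{K/t_{j-1}}$. Balancing all $M$ contributions forces $\alpha_j-\alpha_{j-1}/2=\alpha_1$, whose solution produces the curious exponent $1/(2-2^{1-M})$ that collapses to $1$ at $M=1$ and to $1/2$ as $M\to\infty$. The second nontrivial ingredient is the harmonic accounting in the problem-dependent bound, which crucially exploits the monotonicity of the active set with respect to gap to save the otherwise-lossy factor of $K$.
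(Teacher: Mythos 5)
Your proposal follows essentially the same route as the paper: the BaSE policy with a high-probability clean event, the minimax grid $t_j \approx T^{(2-2^{1-j})/(2-2^{1-M})}$ and the geometric grid $t_j\approx T^{j/M}$, and a harmonic-sum accounting to avoid paying a full factor of $K$. However, the step you yourself single out as ``the critical point that saves a factor of $K$'' is justified by a claim that is false. You assert that, on the clean event, while arm $i$ is still active every arm with smaller gap is also active, so that $|A_{j-1}|\ge |\{k:\Delta_k\le \Delta_i\}|$. The elimination rule is driven by \emph{empirical} means: on the clean event an arm $k$ is guaranteed to survive batch $j$ only if $\Delta_k < \tau_j - 2\sqrt{K\log T/t_j}$ and guaranteed to be eliminated only if $\Delta_k > \tau_j + 2\sqrt{K\log T/t_j}$; for two arms whose gaps both lie in the intermediate window, the one with the smaller gap may well be eliminated first. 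So the active set is not monotone in the gaps, and the inequality $|A_{j-1}|\ge \ell+1$ for the $\ell$-th smallest gap does not follow. The conclusion is nonetheless correct, and the repair is exactly what the paper does: order arms by their (random) \emph{elimination time} rather than by gap, let $\sigma_i$ be the number of arms eliminated strictly before arm $i$, bound the fraction of pulls of arm $i$ by $1/(K-\sigma_i)$, and observe that at most $t$ arms can have $\sigma_i\ge K-t$, which yields the same harmonic sum $\sum_{t}1/t = O(\log K)$ without any monotonicity assumption. You should restate your key step in this form.

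A secondary issue: for the minimax grid you account batch by batch, getting $O(M\sqrt{K\log T}\,T^{1/(2-2^{1-M})})$. The factor $M$ is not $\mathsf{polylog}(K,T)$ over the full range $1\le M\le T$ allowed by the theorem. This is easily patched by capping the number of batches actually used at $M'=O(\log\log T)$ (beyond which the exponent is already within a constant of $1/2$), or by running the same per-arm harmonic accounting as in the problem-dependent case, which is how the paper obtains a $\log K$ factor in place of $M$. Either fix should be stated explicitly.
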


The following corollary is immediate. 
\begin{corollary}\label{cor.upper}
	For the $M$-batched $K$-armed bandit problem with time horizon $T$, it is sufficient to have $M=O(\log\log T)$ batches to achieve the optimal minimax regret $\Theta(\sqrt{KT})$, and $M=O\left(\log T\right)$ to achieve the optimal problem-dependent regret $\Theta(K\log T)$, where both optimal regrets are within logarithmic factors. 
\end{corollary}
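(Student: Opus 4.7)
The plan is to deduce \Cref{cor.upper} as a direct substitution into the two bounds of \Cref{thm.upper_bound}, choosing $M$ in each case just large enough that the $T$-dependent factor collapses to the desired rate. No new inequalities are needed; the task is to verify that two calibrated choices of $M$ do the job.

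For the minimax part, I would take $M=\lceil \log_2\log_2 T\rceil$, which gives $2^{1-M}\le 2/\log_2 T=O(1/\log T)$. A first-order expansion then yields
\[
\frac{1}{2-2^{1-M}}=\frac{1}{2}+\frac{2^{-M}}{2}+O(2^{-2M})\le \frac{1}{2}+O\!\left(\frac{1}{\log T}\right),
\]
so that $T^{1/(2-2^{1-M})}\le \sqrt{T}\cdot T^{O(1/\log T)}=O(\sqrt{T})$. Plugging this into the first bound of \Cref{thm.upper_bound} gives $\bE[R_T(\pi^1)]\le \mathsf{polylog}(K,T)\cdot \sqrt{KT}$, matching $\Theta(\sqrt{KT})$ up to logarithmic factors.

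For the problem-dependent part, I would take $M=\lceil \log T\rceil$, so that $T^{1/M}\le \eexp$. The second bound of \Cref{thm.upper_bound} then reads $\bE[R_T(\pi^2)]\le \mathsf{polylog}(K,T)\cdot K/\min_{i\neq \star}\Delta_i$. By the definition of $R_{\text{pro-dep}}^\star(K,M,T)$, the inner supremum is restricted to gap profiles with $\min_{i\neq \star}\Delta_i\ge \Delta$, so multiplying by $\Delta$ cancels the denominator and yields $\Delta\cdot \bE[R_T(\pi^2)]\le \mathsf{polylog}(K,T)\cdot K$ uniformly in $\Delta$. Taking the supremum over $\Delta$ then gives $R_{\text{pro-dep}}^\star\le \mathsf{polylog}(K,T)\cdot K$, within logarithmic factors of $\Theta(K\log T)$.

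This proof is essentially a calibration calculation and I do not anticipate a substantive obstacle. The one point that deserves attention is the cancellation of $\Delta$ in the problem-dependent case: it is precisely this cancellation that makes $M=O(\log T)$, rather than a quantity depending on the unknown minimum gap, sufficient to achieve the optimal rate.
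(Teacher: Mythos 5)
Your proposal is correct and is exactly the intended derivation: the paper states the corollary as immediate from Theorem \ref{thm.upper_bound}, and your calibration of $M=\lceil\log_2\log_2 T\rceil$ (so $T^{1/(2-2^{1-M})}=O(\sqrt{T})$) and $M=\lceil\log T\rceil$ (so $T^{1/M}=O(1)$), together with the cancellation of $\Delta$ against $\min_{i\neq\star}\Delta_i\ge\Delta$ in the problem-dependent case, is precisely the omitted computation.
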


For the lower bounds of the regret, we treat the static grid and the adaptive grid separately. The next theorem presents the lower bounds under any static grid. 
\begin{theorem}\label{thm.lower_bound}
	For the $M$-batched $K$-armed bandit problem with time horizon $T$ and any static grid, the minimax and problem-dependent regrets can be lower bounded as
	\begin{align*}
	R_{\text{\rm min-max}}^\star(K,M,T) &\ge c\cdot \sqrt{K}T^{\frac{1}{2-2^{1-M}}}, \\
	R_{\text{\rm pro-dep}}^\star(K,M,T) &\ge c\cdot KT^{\frac{1}{M}},
	\end{align*}
	where $c>0$ is a numerical constant independent of $K,M,T$. 
\end{theorem}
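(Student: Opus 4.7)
I would prove both bounds by an information-theoretic construction that indexes hard instances by the batch structure, generalising the two-armed argument of \cite{perchet2016batched} to $K$ arms. Fix any static grid $0=:t_0<t_1<\cdots<t_M=T$, and for each $j\in[M]$ and $i^\star\in[K]$ let $\nu_{j,i^\star}$ be the instance in which arm $i^\star$ has mean $\Delta_j$ and all others have mean $0$, with $\nu_0$ the all-zero null. The gap sequence $\Delta_j$ is chosen so that the samples available at the start of batch $j$ are information-theoretically insufficient to locate $i^\star$, forcing non-trivial regret inside batch $j$.

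The central lemma is a per-batch no-information statement: there is a universal constant $c>0$ such that for any static-grid policy $\pi$ and any $j\in[M]$, at least one $i^\star\in[K]$ satisfies
\begin{align*}
\bE_{\nu_{j,i^\star}}\!\bqth{\#\{t_{j-1}<t\le t_j:\pi_t\ne i^\star\}}\;\ge\; c\,(t_j-t_{j-1}).
\end{align*}
Because the per-batch policy is $\calF_{t_{j-1}}$-measurable, one applies Bretagnolle--Huber to the laws under $\nu_0$ versus $\nu_{j,i^\star}$ of the trajectory up to time $t_{j-1}$; the accumulated KL equals $\tfrac12\Delta_j^2\bE_{\nu_0}[N_{i^\star}(t_{j-1})]$, and the global identity $\sum_{i^\star}N_{i^\star}(t_{j-1})=t_{j-1}$ produces some $i^\star$ with KL at most $\Delta_j^2 t_{j-1}/(2K)$. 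Taking $\Delta_j\asymp\sqrt{K/t_{j-1}}$ makes this $O(1)$, so the batch-$j$ pull distribution under $\nu_{j,i^\star}$ is close in total variation to that under $\nu_0$; by symmetry, under $\nu_0$ the arm $i^\star$ is pulled only a $1/K$-fraction of the batch, so a constant fraction of batch $j$ must be spent on suboptimal arms under $\nu_{j,i^\star}$, yielding the lemma.

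Combining the lemma over $j$ gives
\begin{align*}
\sup_{\{\mu^{(i)}\}:\Delta_i\le\sqrt{K}}\bE[R_T(\pi)]\;\gtrsim\;\max_{j\in[M]}\Delta_j(t_j-t_{j-1})\;\asymp\;\sqrt{K}\,\max_{j\in[M]}\frac{t_j-t_{j-1}}{\sqrt{t_{j-1}}},
\end{align*}
and the adversary then minimises this maximum over static grids. Equalising all $M$ terms and writing $q_j:=\log t_j/\log T$ yields the recursion $q_j=a+q_{j-1}/2$ with $q_0=0$, whose closed form $q_j=a(2-2^{1-j})$ and the boundary $q_M=1$ force $a=1/(2-2^{1-M})$, producing the claimed minimax exponent. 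For the problem-dependent bound I would re-run the same construction with the gap frozen at a single value $\Delta$ across all batches: the algorithm can save the regret $\Delta(t_j-t_{j-1})$ in batch $j$ only when $\Delta^2 t_{j-1}/K\gtrsim 1$, so taking $\Delta^2\asymp K/t_{j^\star-1}$ for the batch $j^\star$ maximising the ratio $(t_j-t_{j-1})/t_{j-1}$ and using the pigeonhole $\prod_j(t_j/t_{j-1})=T$ (whence some ratio is at least $T^{1/M}$) produces $\Delta\cdot\bE[R_T(\pi)]\gtrsim K T^{1/M}$.

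The main obstacle is the per-batch lemma. Unlike the unbatched setting, the algorithm may use \emph{all} samples from the previous $j-1$ batches adaptively, so the KL budget must be tracked across batches rather than within a single batch. A related subtlety is that the averaging identity used to extract the $\sqrt{K}$ (resp.\ $K$) factor is a global constraint on the pre-batch history $\calF_{t_{j-1}}$, so the cleanest implementation is to first identify the adversarial batch $j^\star$ at the equalised grid and then invoke a one-batch multi-armed Bretagnolle--Huber bound on $\calF_{t_{j^\star-1}}$, thereby avoiding a fully chained Fano calculation across batches.
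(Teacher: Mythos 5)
Your overall strategy is sound and recovers the theorem, but it routes the key testing step differently from the paper. The paper builds a \emph{single} family of $K$ hypotheses $P_1,\dots,P_K$ (arm $1$ always at mean $\Delta$, and $P_i$ additionally lifting arm $i$ to $2\Delta$), lower bounds the instantaneous regret at every time $t$ via a $K$-ary testing inequality $\frac{1}{n}\sum_i Q_i(\Psi\neq i)\ge\sum_{(i,j)\in E}\frac{1}{2n}\exp(-D_{\text{KL}}(Q_i\|Q_j))$ applied to a star tree, and then uses Jensen on the exponential together with $\sum_i T_i(t)\le t_{j-1}$ to get the $\exp(-2\Delta^2 t_{j-1}/(K-1))$ factor; the batch structure enters only through the constraint that observations at time $t$ stop at $t_{j-1}$, and the final bound $\Delta\sum_j\frac{t_j-t_{j-1}}{4}e^{-2t_{j-1}\Delta^2/(K-1)}$ is then optimized over a single scalar $\Delta$ per target batch. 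You instead index hypotheses by batch, fix a designated batch $j^\star$, and run a two-point Le Cam/Bretagnolle--Huber argument between a null and a single-bump alternative, extracting the $1/K$ inside the KL by a pigeonhole over the bumped arm. Both give the same exponents; the paper's tree lemma buys an exponential (rather than linear, as Fano would give) dependence on the KL and a statement uniform over all alternatives without any arm selection, while your two-point version only needs the TV to be bounded away from $1$ (which is all that matters once $\Delta_j\asymp\sqrt{K/t_{j-1}}$ makes the KL $O(1)$) and uses only textbook tools.

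One step in your write-up is wrong as stated: ``by symmetry, under $\nu_0$ the arm $i^\star$ is pulled only a $1/K$-fraction of the batch.'' There is no symmetry to invoke --- a policy may, under the all-zero instance, concentrate an entire batch on one arm. You need $i^\star$ to simultaneously satisfy two conditions under $\nu_0$: few pulls \emph{before} $t_{j^\star-1}$ (to control the KL) and few pulls \emph{inside} batch $j^\star$ (so that closeness in total variation forces many suboptimal pulls under $\nu_{j^\star,i^\star}$). Each condition holds for at least a $2/3$-fraction of the arms by Markov's inequality applied to $\sum_{i}N_i(t_{j^\star-1})\le t_{j^\star-1}$ and $\sum_i N_i^{(j^\star)}=t_{j^\star}-t_{j^\star-1}$ respectively, so a joint pigeonhole yields a valid $i^\star$ for $K\ge 3$ (with a minor adjustment for $K=2$); this repairs the step. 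You should also handle $j=1$ separately, since $t_0=0$ makes $\Delta_1\asymp\sqrt{K/t_0}$ undefined (set $\Delta_1=\sqrt{K}$ and note the first batch is played with no information), and replace $\prod_j(t_j/t_{j-1})=T$ by the version with $t_{j-1}+1$ in the denominators, as the paper does.
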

We observe that for any static grids, the lower bounds in Theorem \ref{thm.lower_bound} match those in Theorem \ref{thm.upper_bound} within poly-logarithmic factors. For general adaptive grids, the following theorem shows regret lower bounds which are slightly weaker than Theorem \ref{thm.lower_bound}. 

\begin{theorem}\label{thm.lower_bound_datadriven}
	For the $M$-batched $K$-armed bandit problem with time horizon $T$ and any adaptive grid, the minimax and problem-dependent regrets can be lower bounded as
	\begin{align*}
	R_{\text{\rm min-max}}^\star(K,M,T) &\ge cM^{-2}\cdot \sqrt{K}T^{\frac{1}{2-2^{1-M}}}, \\
	R_{\text{\rm pro-dep}}^\star(K,M,T) &\ge cM^{-2}\cdot KT^{\frac{1}{M}},
	\end{align*}
	where $c>0$ is a numerical constant independent of $K,M,T$. 
\end{theorem}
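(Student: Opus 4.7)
The plan is to reduce Theorem~\ref{thm.lower_bound_datadriven} to the static-grid lower bound of Theorem~\ref{thm.lower_bound} at the cost of a multiplicative factor $M^{-2}$. The underlying intuition is that, even though an adaptive grid may depend on observations, the set of grids it can effectively realize is small on a logarithmic scale, so a pigeonhole argument lets us pin down a single deterministic grid on which the static argument applies.

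Concretely, I would discretize each possible batch endpoint $t_j$ onto a geometric scale $\{T^{k/M}: 0\le k\le M\}$ of size $O(M)$. Running an adaptive policy on the adversarial instance from the proof of Theorem~\ref{thm.lower_bound}, the static argument identifies one \emph{critical} batch index and endpoint value where the information-theoretic obstruction occurs; call the corresponding random variables $J^\star$ (the first batch whose endpoint crosses a specific threshold) and $K^\star$ (the geometric bin containing $t_{J^\star}$). Since $(J^\star,K^\star)$ takes at most $M^2$ values, the pigeonhole principle yields a deterministic pair $(j^\star,k^\star)\in[M]^2$ for which the event $E=\{(J^\star,K^\star)=(j^\star,k^\star)\}$ has probability at least $1/M^2$ under the relevant hypothesis.

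Conditional on $E$, the adaptive policy effectively commits to a known endpoint at batch $j^\star$, so it reduces to a static-grid policy at the decisive round. I would then rerun the KL-based hypothesis-testing argument from the proof of Theorem~\ref{thm.lower_bound} conditionally on $E$, obtaining a lower bound of order $\sqrt{K}T^{1/(2-2^{1-M})}$ on the conditional expected regret. Multiplying by $\Prob[E]\ge 1/M^2$ recovers the stated minimax bound. The problem-dependent case follows the same blueprint, with the alternative-hypothesis pair used in the static problem-dependent proof and an analogous pigeonhole applied to the hard-arm identification time, yielding the $M^{-2}\cdot KT^{1/M}$ bound.

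The main technical obstacle is the conditional step: because $E$ is data-dependent, conditioning on $E$ alters the joint law of the rewards and may destroy the product-measure structure underlying the KL computations in Theorem~\ref{thm.lower_bound}. I expect to circumvent this by choosing $E$ to be measurable with respect to observations strictly anterior to time $t_{j^\star}$---which is already forced by the batched-policy constraint that $\pi_t$ depend only on rewards strictly before the current batch---so that rewards collected after $t_{j^\star}$ still form a conditional product sample under either hypothesis. A data-processing inequality then transfers the unconditional KL bound between the two hypotheses into a conditional one of the same order, and the regret calculation in the static proof carries through with only a constant-factor loss, beyond the $M^{-2}$ absorbed by the pigeonhole.
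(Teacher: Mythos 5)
Your high-level strategy---partition the sample space according to where the adaptive grid lands, pigeonhole to find a cell of non-negligible probability, and run an indistinguishability argument on that cell---is the same skeleton the paper uses. But the execution has a genuine gap at exactly the step you flag as the ``main technical obstacle,'' and your proposed fix does not close it. Conditioning on a data-dependent event $E$ with $\Prob[E]\ge 1/M^2$ does not preserve the testing lower bound up to constants: even if $E$ is measurable with respect to observations anterior to the critical batch, the total variation between the two \emph{conditional} laws can blow up by a factor $1/\Prob[E]$, i.e.\ $\mathsf{TV}(P(\cdot\mid E),Q(\cdot\mid E))$ can be of order $\mathsf{TV}(P,Q)/\Prob[E]$. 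To keep the conditional test non-trivial you would therefore have to shrink the gaps by an extra factor of $M^{-2}$, and the final bound degrades to roughly $M^{-4}\sqrt{K}T^{1/(2-2^{1-M})}$ rather than the claimed $M^{-2}$ rate. You also never specify under which hypothesis the pigeonhole probability is computed; since the regret must be charged under the perturbed alternatives, the probability of $E$ must be transferred from the null world to each alternative world, which costs another TV term that has to be made $o(1/M)$ by design---this is precisely the content of Lemma \ref{lemma.pj_small}, which your sketch has no counterpart of.

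The paper avoids both problems. First, it partitions into only $M$ events $A_j=\{t_{j-1}<T_{j-1},\,t_j\ge T_j\}$ with thresholds $T_j$ fixed in advance; your extra discretization of the endpoint into $O(M)$ geometric bins, giving $M^2$ cells, is unnecessary, since all that matters is the first index $j$ with $t_j\ge T_j$ (on $A_j$, fewer than $T_{j-1}$ observations must govern at least $T_j$ pulls). Second, it never conditions: it lower-bounds the regret contribution on $A_j$ by $\Delta_jT_j\int_{A_j}\min\{dP,dQ\}$ and uses the \emph{additive} bound $\int_{A_j}\min\{dP,dQ\}\ge P(A_j)-\tfrac{3}{2}\mathsf{TV}(P,Q)$ (see the proof of Lemma \ref{lemma.pj_large}), so it suffices to take $\Delta_j\propto\sqrt{K/T_{j-1}}/M$ to make all TV terms of order $1/M$; combined with $p_j\ge 1/(2M)$ from Lemma \ref{lemma.pj_small}, this yields exactly the two factors of $M^{-1}$ in the statement. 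To repair your argument, replace the conditional reduction by this additive truncated-testing bound and collapse your $M^2$ cells to the $M$ events $A_j$.
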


Compared with Theorem \ref{thm.lower_bound}, the lower bounds in Theorem \ref{thm.lower_bound_datadriven} lose a polynomial factor in $M$ due to a larger policy space. However, since the number of batches $M$ of interest is at most $O(\log T)$ (otherwise by Corollary \ref{cor.upper} we effectively arrive at the fully adaptive case with $M=T$), this penalty is at most poly-logarithmic in $T$. Consequently, Theorem \ref{thm.lower_bound_datadriven} shows that for any adaptive grid, albeit conceptually more powerful, its performance is essentially no better than that of the best static grid. Specifically, we have the following corollary. 
\begin{corollary}\label{cor.lower}
	For the $M$-batched $K$-armed bandit problem with time horizon $T$ with either static or adaptive grids, it is necessary to have $M=\Omega(\log\log T)$ batches to achieve the optimal minimax regret $\Theta(\sqrt{KT})$, and $M=\Omega\left(\log T/\log\log T \right)$ to achieve the optimal problem-dependent regret $\Theta(K\log T)$, where both optimal regrets are within logarithmic factors. 
\end{corollary}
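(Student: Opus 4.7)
The corollary is a direct consequence of Theorem~\ref{thm.lower_bound_datadriven}, and the plan is to invert each of its two regret lower bounds in $M$. Since adaptive grids strictly generalize static grids, any adaptive lower bound is automatically a lower bound for the static case as well, so the ``either static or adaptive'' part of the statement follows at once from the adaptive bounds.

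For the minimax part, suppose an $M$-batch policy attains worst-case regret at most $C\,\mathsf{polylog}(T)\cdot\sqrt{KT}$. Combining with Theorem~\ref{thm.lower_bound_datadriven} forces
\[
cM^{-2}\sqrt{K}\,T^{1/(2-2^{1-M})} \;\le\; C\,\mathsf{polylog}(T)\cdot\sqrt{KT}.
\]
I would then use the elementary identity
\[
\frac{1}{2-2^{1-M}} - \frac{1}{2} \;=\; \frac{2^{-M}}{2-2^{1-M}},
\]
which for $M\ge 1$ lies in $[2^{-M}/2,\, 2^{-M}]$. Cancelling $\sqrt{K}$ and rearranging yields $T^{\Theta(2^{-M})}\le M^2\,\mathsf{polylog}(T)$. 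Taking logarithms gives $2^{-M}\log T \le 2\log M + O(\log\log T)$. In the relevant regime $M=O(\log\log T)$ the right-hand side is itself $O(\log\log T)$, hence $M \ge \log_2(\log T/\log\log T) = \Omega(\log\log T)$.

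For the problem-dependent part, suppose the regret is at most $C\,\mathsf{polylog}(T)\cdot K\log T$. Theorem~\ref{thm.lower_bound_datadriven} then yields $cM^{-2}K\,T^{1/M}\le C\,\mathsf{polylog}(T)\cdot K\log T$, i.e., $T^{1/M}\le M^2\,\mathsf{polylog}(T)$. Taking logarithms gives $\log T \le M\bigl(2\log M + O(\log\log T)\bigr)$. Since only $M=O(\log T)$ is of interest (any larger $M$ already comfortably exceeds what Corollary~\ref{cor.upper} requires), we have $\log M = O(\log\log T)$, and the inequality reduces to $\log T = O(M\log\log T)$, which gives $M = \Omega(\log T/\log\log T)$.

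There is no significant obstacle: the argument is pure algebraic inversion of the two inequalities in Theorem~\ref{thm.lower_bound_datadriven}. The only mild subtlety is tracking the $M^{-2}$ penalty from the adaptive lower bound, which after logarithms contributes only an additive $O(\log M)$ on the right-hand side of each bound and is therefore absorbed into the $O(\log\log T)$ slack in the regime where the necessity claim is tight.
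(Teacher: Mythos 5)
Your proposal is correct and matches the paper's (implicit) argument: the paper states Corollary~\ref{cor.lower} as a direct consequence of Theorem~\ref{thm.lower_bound_datadriven}, and the intended proof is exactly the algebraic inversion you carry out, with the $M^{-2}$ penalty absorbed into the logarithmic slack. Your exponent identity and the case analysis on the size of $M$ are both sound, so nothing is missing.
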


In summary, the above results have completely characterized the minimax and problem-dependent regrets for batched multi-armed bandit problems, within logarithmic factors. It is an outstanding open question whether the $M^{-2}$ term in Theorem \ref{thm.lower_bound_datadriven} can be removed using more refined arguments. 

\subsection{Related works}
The multi-armed bandits problem is an important class of sequential optimization problems which has been extensively studied in various fields such as statistics, operations research, engineering, computer science and economics over the recent years \cite{bubeck2012regret}. In the fully adaptive scenario, the regret analysis for stochastic bandits can be found in \cite{vogel1960sequential,lai1985asymptotically,burnetas1997optimal,auer2002finite,audibert2009minimax,audibert2009exploration,audibert2010regret,auer2010ucb,garivier2011kl,bubeck13boundedRegret,perchet2013multi}. 

There is less attention on the batched setting with limited rounds of interaction. The batched setting is studied in \cite{cesa2013online} under the name of switching costs, where it is shown that $O(\log\log T)$ batches are sufficient to achieve the optimal minimax regret. For small number of batches $M$, the batched two-armed bandit problem is studied in \cite{perchet2016batched}, where the results of Theorems \ref{thm.upper_bound} and \ref{thm.lower_bound} are obtained for $K=2$. However, the generalization to the multi-armed case is not straightforward, and more importantly, the practical scenario where the grid is adaptively chosen based on the historic data is excluded in \cite{perchet2016batched}. For the multi-armed case, a different problem of finding the best $k$ arms in the batched setting has been studied in \cite{jun2016top,agarwal2017learning}, where the goal is pure exploration, and the error dependence on the time horizon decays super-polynomially. We also refer to \cite{duchi2018minimax} for a similar setting with convex bandits and best arm identification. The regret analysis for batched stochastic multi-armed bandits still remains underexplored. 

We also review some literature on general computation with limited rounds of adaptivity, and in particular, on the analysis of lower bounds. In theoretical computer science, this problem has been studied under the name of parallel algorithms for certain tasks (e.g., sorting and selection) given either deterministic \cite{valiant1975parallelism,bollobas1983parallel,alon1988sorting} or noisy outcomes \cite{feige1994computing,davidson2014top,braverman2016parallel}. In (stochastic) convex optimization, the information-theoretic limits are typically derived under the oracle model where the oracle can be queried adaptively \cite{nemirovsky1983problem,agarwal2009information,shamir2013complexity,duchi2018minimax}. However, in the previous works, one usually optimizes the sampling distribution over a fixed sample size at each step, while it is more challenging to prove lower bounds for policies which can also determine the sample size. There is one exception \cite{agarwal2017learning}, whose proof relies on a complicated decomposition of near-uniform distributions. Hence, our technique of proving Theorem \ref{thm.lower_bound_datadriven} is also expected to be an addition to these literatures. 

\subsection{Organization}
The rest of this paper is organized as follows. In Section \ref{sec.upper_bound}, we introduce the BaSE policy for general batched multi-armed bandit problems, and show that it attains the upper bounds in Theorem \ref{thm.upper_bound} under two specific grids. Section \ref{sec.lower_bound} presents the proofs of lower bounds for both the minimax and problem-dependent regrets, where Section \ref{subsec.lower_bound_static} deals with the static grids and Section \ref{subsec.lower_bound_datadriven} tackles the adaptive grids. Experimental results are presented in Section \ref{sec.exp}. The auxiliary lemmas and the proof of main lemmas are deferred to supplementary materials.

\subsection{Notations} 
For a positive integer $n$, let $[n]\triangleq\{1,\cdots,n\}$. For any finite set $A$, let $|A|$ be its cardinality. We adopt the standard asymptotic notations: for two non-negative sequences $\{a_n\}$ and $\{b_n\}$, let $a_n=O(b_n)$ iff $\limsup_{n\to\infty} a_n/b_n<\infty$, $a_n=\Omega(b_n)$ iff $b_n=O(a_n)$, and $a_n=\Theta(b_n)$ iff $a_n=O(b_n)$ and $b_n=O(a_n)$. For probability measures $P$ and $Q$, let $P\otimes Q$ be the product measure with marginals $P$ and $Q$. If measures $P$ and $Q$ are defined on the same probability space, we denote by $\mathsf{TV}(P,Q) = \frac{1}{2}\int |dP-dQ| $ and $ D_{\text{KL}}(P\|Q) = \int dP\log\frac{dP}{dQ}$ the total variation distance and Kullback--Leibler (KL) divergences between $P$ and $Q$, respectively.


\section{The BaSE Policy}\label{sec.upper_bound}
In this section, we propose the BaSE policy for the batched multi-armed bandit problem based on successive elimination, as well as two choices of the static grids to prove Theorem \ref{thm.upper_bound}. 
\subsection{Description of the policy}
The policy that achieves the optimal regrets is essentially adapted from Successive Elimination (SE). The original version of SE was introduced in \cite{even2006action}, and \cite{perchet2013multi} shows that in the $M=T$ case SE achieves both the optimal minimax and problem-dependent rates. Here we introduce a batched version of SE called Batched Successive Elimination (BaSE) to handle the general case $M\le T$.  

Given a pre-specified grid $\calT = \{t_1,\cdots,t_{M}\}$, the idea of the BaSE policy is simply to explore in the first $M-1$ batches and then commit to the best arm in the last batch. At the end of each exploration batch, we remove arms which are probably bad based on past observations. Specfically, let $\calA \subseteq \calI$ denote the \emph{active} arms that are candidates for the optimal arm, where we initialize $\calA = \calI$ and sequentially drop the arms which are ``significantly'' worse than the ``best'' one. For the first $M-1$ batches, we pull all active arms for a same number of times (neglecting rounding issues\footnote{There might be some rounding issues here, and some arms may be pulled once more than others. In this case, the additional pull will not be counted towards the computation of the average reward $\bar{Y}^i(t)$, which ensures that all active arms are evaluated using the same number of pulls at the end of any batch. Note that in this way, the number of pulls for each arm is underestimated by at most half, therefore the regret analysis in Theorem \ref{thm.upper_bound_BaSE} will give the same rate in the presence of rounding issues.}) 
and eliminate some arms from $\calA$ at the end of each batch. For the last batch, we commit to the arm in $\calA$ with maximum average reward.

Before stating the exact algorithm, we introduce some notations.  Let 
\begin{align*}
\barY^i(t) = \dfrac{1}{|\{ s\leq t: \text{arm } i \text{ is pulled at time } s\}|} \sum^{t}_{s=1} Y_s\mathbbm{1}{\left\lbrace\text{arm } i \text{ is pulled at time } s\right\rbrace}
\end{align*}
denote the average rewards of the arm $i$ up to time $t$, and $\gamma >0$ is a tuning parameter associated with the UCB bound. The algorithm is described in detail in Algorithm \ref{algo.base}.

\begin{algorithm}[h!]
	\DontPrintSemicolon  
	\SetAlgoLined
	\BlankLine
	\caption{Batched Successive Elimination (BaSE)	\label{algo.base}}
	\textbf{Input:} Arms $\calI = [K]$; time horizon $T$; number of batches $M$; grid $\calT = \{t_1,...,t_M\}$; tuning parameter $\gamma$.\;
	\textbf{Initialization:} $\calA\leftarrow \calI$.\;
	\For{$m \gets 1$ \KwTo $M-1$}{
		(a) During the period $[t_{m-1}+1, t_m]$, pull an arm from $\calA$ for a same number of times.\;
		(b) At time $t_m$:\;
		Let $\barY^{\max}(t_m) =\max_{j\in \calA} \barY^j(t_m)$, and $\tau_m$ be the total number of pulls of each arm in $\calA$.\;
		\For{$i \in \calA $}{
			\If{$\barY^{\max}(t_m) - \barY^i(t_m)\geq \sqrt{\gamma\log (TK)/\tau_m }$}{
				$\calA \leftarrow \calA -  \{i\}$.
			}
			
		}
	}
	\For{t $\gets t_{M-1}+1$ \KwTo $T$}{
		pull arm $i_0$ such that $i_0 \in \arg\max_{j\in \calA} \barY^j(t_{M-1})$ (break ties arbitrarily). 
	}
	\textbf{Output: Resulting policy $\pi$}.
\end{algorithm}

Note that the BaSE algorithm is not fully specified unless the grid $\calT$ is determined. Here we provide two choices of static grids which are similar to \cite{perchet2016batched} as follows: let
\begin{align*}
u_1 = a, &\quad u_m = a\sqrt{u_{m-1}}, \quad m=2,\cdots, M, \qquad t_m = \lfloor u_m \rfloor, \quad m\in [M],\\
u_1' = b, & \quad u_m' =  bu'_{m-1}, \quad m=2,\cdots, M, \qquad t_m' = \lfloor u_m' \rfloor, \quad m\in [M],
\end{align*}
where parameters $a,b$ are chosen appropriately such that $t_M=t_M'=T$, i.e., 
\begin{align}\label{eq.choice_ab}
a = \Theta\left(T^{\frac{1}{2-2^{1-M}}}\right), \qquad b = \Theta\left(T^{\frac{1}{M}}\right). 
\end{align}
For minimizing the minimax regret, we use the ``minimax'' grid defined by $\calT_{\mathrm{minimax}} = \{t_1,\cdots,t_M\}$; as for the problem-dependent regret, we use the ``geometric'' grid which is defined by $\calT_{\mathrm{geometric}} = \{t_1',\cdots,t_M'\}$. We will denote by $\pi_{\text{BaSE}}^1$ and $ \pi_{\text{BaSE}}^2$ the respective policies under these grids.

\subsection{Regret analysis}\label{subsec.upper_bound}
The performance of the BaSE policy is summarized in the following theorem. 
\begin{theorem}\label{thm.upper_bound_BaSE}
	Consider an $M$-batched, $K$-armed bandit problem where the time horizon is $T$. let $\pi^1_{\emph{BaSE}}$ be the \emph{BaSE} policy equipped with the grid $\calT_{\mathrm{minimax}}$ and $\pi^2_{\emph{BaSE}}$ be the \emph{BaSE} policy equipped with the grid $\calT_{\mathrm{geometric}}$. For $\gamma\ge 12$ and $\max_{i\in [K]}\Delta_i = O(\sqrt{K})$, we have
	\begin{align}
	\bE[R_T(\pi^1_{\mathrm{BaSE}})] &\leq C\log K\sqrt{\log(KT)}\cdot \sqrt{K}T^{\frac{1}{2-2^{1-M}}}, \label{eq.BaSE_regret_minimax}\\
	\bE[R_T(\pi^2_{\mathrm{BaSE}})] &\leq C\log K\log (KT)\cdot  \frac{KT^{1/M}}{\min_{i\neq \star} \Delta_i}, \label{eq.BaSE_regret_adaptive}
	\end{align}
	where $C>0$ is a numerical constant independent of $K, M$ and $T$.
\end{theorem}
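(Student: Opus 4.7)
The plan is to introduce a high-probability concentration event $\calE$ on which the BaSE policy behaves as intended, analyze the regret under each of the two grids by exploiting the structure of successive elimination, and discard the contribution from $\calE^c$ via a crude worst-case bound. Specifically, let
\[\calE := \Big\{|\barY^i(t_m) - \mu^{(i)}| < \tfrac{1}{2}\sqrt{\gamma\log(TK)/\tau_m}\ \text{for all}\ i\in [K],\ m\in [M]\Big\}.\]
Gaussian concentration on each of the $KM$ averages and a union bound give $\bP[\calE^c] \le 2KM(TK)^{-\gamma/8} \le 2MT^{-3/2}$ for $\gamma\ge 12$; on $\calE^c$ the regret is trivially $\le T\sqrt{K}$ by the hypothesis $\max_i\Delta_i = O(\sqrt{K})$, contributing $O(M/\sqrt{T})$ to $\bE[R_T]$ and hence absorbed into the claimed bounds. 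On $\calE$, two structural facts are then immediate: (a) the best arm $\star$ is never eliminated, since the elimination threshold is exactly twice the concentration radius; and (b) any arm $i$ with $\Delta_i > 2\sqrt{\gamma\log(TK)/\tau_m}$ is eliminated by the end of batch $m$. Writing $\epsilon_m := \sqrt{\gamma\log(TK)/\tau_m}$, the contrapositive of (b) gives $\Delta_i\le 2\epsilon_m$ for every arm $i$ active during batch $m+1$, and $|\calA_m|\le K$ yields the uniform lower bound $\tau_m \ge t_m/K$.

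For $\pi^1_{\mathrm{BaSE}}$ under $\calT_{\mathrm{minimax}}$, I will bound the regret batch-by-batch. Batch~$1$ contributes at most $t_1\max_i\Delta_i \le a\sqrt{K}$. For $2\le m\le M-1$ the batch regret equals $\sum_{i\in\calA_m}\Delta_i(t_m-t_{m-1})/|\calA_m| \le 2\epsilon_{m-1}(t_m - t_{m-1}) \le 2\sqrt{\gamma K\log(TK)}\cdot t_m/\sqrt{t_{m-1}}$, and the recursion $u_m = a\sqrt{u_{m-1}}$ defining the grid is chosen precisely so that $t_m/\sqrt{t_{m-1}}\le a = \Theta(T^{1/(2-2^{1-M})})$. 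The final batch commits to some $i_0\in\calA_M$, which by (b) satisfies $\Delta_{i_0}\le 2\epsilon_{M-1}$, so the last-batch regret is $\le 2T\sqrt{\gamma K\log(TK)/t_{M-1}} = 2\sqrt{\gamma K\log(TK)}\cdot T/\sqrt{t_{M-1}} \lesssim a\sqrt{K\log(TK)}$ by applying the same grid identity with $m=M$. Summing the $M$ contributions and absorbing polylogarithmic factors produces~\eqref{eq.BaSE_regret_minimax}.

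For $\pi^2_{\mathrm{BaSE}}$ under $\calT_{\mathrm{geometric}}$ (so $t_m/t_{m-1}\le b = \Theta(T^{1/M})$), I will instead use per-arm accounting: the non-final-batch regret equals $\sum_{i\ne\star}\Delta_i\tau_{m_i}$ where $m_i$ is the elimination batch of arm $i$. By~(b), $\tau_{m_i-1}\le 4\gamma\log(TK)/\Delta_i^2$, and combining this with $t_{m_i-1}\le K\tau_{m_i-1}$ and $t_{m_i}\le b\,t_{m_i-1}$ yields $\tau_{m_i}\lesssim bK\log(TK)/(|\calA_{m_i}|\Delta_i^2)$. The crucial step, and the source of the $\log K$ factor in the final bound, is to sharpen the lower bound on $|\calA_{m_i}|$: reindexing the arms in decreasing order of gap $\Delta_{[1]}\ge\cdots\ge\Delta_{[K]} = 0$, elimination must proceed in that order, so when arm $[j]$ is eliminated the set $\{[j],[j+1],\dots,[K]\}$ is still active, giving $|\calA_{m_{[j]}}|\ge K-j+1$. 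Summing $\Delta_{[j]}\tau_{m_{[j]}} \lesssim bK\log(TK)/((K-j+1)\min_i\Delta_i)$ over $j$ picks up a harmonic factor $H_K = O(\log K)$, giving~\eqref{eq.BaSE_regret_adaptive}. A short additional check handles the last batch: if the committed arm is suboptimal then $\min_i\Delta_i\le 2\epsilon_{M-1}$ together with $\tau_{M-1}\ge T/(bK)$ forces $T\Delta_{i_0}\lesssim bK\log(TK)/\min_i\Delta_i$, which is absorbed. The main obstacle throughout is precisely this per-arm summation: a naive use of $|\calA_{m_i}|\ge 1$ loses an extra factor of $K$ in the problem-dependent regret, and the gap-ordering observation above is what delivers the correct dependence.
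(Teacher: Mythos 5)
Your analysis of $\pi^1_{\mathrm{BaSE}}$ does not establish \eqref{eq.BaSE_regret_minimax} as stated. Bounding each exploration batch by $2\epsilon_{m-1}(t_m-t_{m-1})\lesssim a\sqrt{K\log(KT)}$ and summing over $m$ produces a prefactor of order $M$, not $\log K$: ignoring floors, $t_m=a^{2-2^{1-m}}$ gives $(t_m-t_{m-1})/\sqrt{t_{m-1}}=a\left(1-a^{-2^{1-m}}\right)$, which is of order $a$ for every $m\lesssim\log_2\log_2 a$, so your sum is genuinely $\Theta\!\left(a\sqrt{K\log(KT)}\cdot\min\{M,\log\log T\}\right)$. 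That extra factor is not $O(\log K)$ (take $K=2$) and cannot be hidden in a constant $C$ independent of $M$. The loss comes from charging every batch the worst case $\Delta_i=2\epsilon_{m-1}$: an arm with that gap is eliminated at the end of batch $m$ and cannot keep contributing, but the per-batch bound lets it. The fix is exactly the per-arm accounting you already use for the geometric grid (and which the paper uses for both grids): arm $i$'s lifetime regret is $\Delta_i\tau_{m_i}\le\Delta_i t_{m_i}/|\calA_{m_i}|$, and since $\Delta_i\sqrt{t_{m_i-1}}\le\sqrt{4\gamma K\log(TK)}$ and $t_{m_i}\le 2a\sqrt{t_{m_i-1}}$, each eliminated arm contributes at most $2a\sqrt{4\gamma K\log(TK)}/|\calA_{m_i}|$; the harmonic sum over elimination ranks yields the $\log K$, and arms surviving to the last batch are handled via $\sum_i T_i\le T$.

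Two further problems. First, the ``crucial step'' of your geometric-grid argument rests on a false claim: on $\calE$, elimination need \emph{not} proceed in decreasing order of gap. If $\Delta_j>\Delta_i$ but $\Delta_j-\Delta_i<\epsilon_m$, the empirical means may cross inside the confidence band, so arm $i$ can be eliminated while arm $j$ survives. Fortunately you do not need the gap order: letting $\rho(i)$ denote the realized (data-dependent) elimination rank of arm $i$, one has $|\calA_{m_i}|\ge K-\rho(i)+1$ whatever the order, and since you already replace $\Delta_{[j]}$ by $\min_i\Delta_i$ in the numerator, the pairing of ranks with gaps is irrelevant and $\sum_i 1/|\calA_{m_i}|\le H_K$ still holds; this is precisely the counting argument via $\sigma_i$ in the paper. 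Second, $\tau_m$ is random (it depends on how many arms remain active), so the union bound for $\bP[\calE^c]$ must also run over its at most $T$ possible values; with your radius $\tfrac12\sqrt{\gamma\log(TK)/\tau}$ the per-event exponent is only $\gamma/8\ge 3/2$, and the resulting $\calE^c$ contribution is $T\sqrt{K}\cdot O\!\left(KT(TK)^{-3/2}\right)=O(\sqrt{T})$ rather than $O(1)$, which is not absorbed by \eqref{eq.BaSE_regret_adaptive} when $M$ is large and $K$ small. The paper avoids this by applying concentration to differences of averages at the full threshold $\sqrt{\gamma\log(TK)/\tau}$, giving exponent $\gamma/4\ge 3$ and $\bP(E^c)\le 2/(TK)$.
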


Note that Theorem \ref{thm.upper_bound_BaSE} implies Theorem \ref{thm.upper_bound}. In the sequel we sketch the proof of Theorem \ref{thm.upper_bound_BaSE}, where the main technical difficulity is to appropriately control the number of pulls for each arm under batch constraints, where there is a random number of active arms in $\calA$ starting from the second batch. We also refer to a recent work \cite{esfandiari2019batched} for a tighter bound on the problem-dependent regret with an adaptive grid. 

\begin{proof}[Proof of Theorem \ref{thm.upper_bound_BaSE}]
	For notational simplicity we assume that there are $K+1$ arms, where arm $0$ is the arm with highest expected reward (denoted as $\star$), and $\Delta_i = \mu_\star - \mu_i\ge 0$ for $i\in [K]$. Define the following events: for $i\in [K]$, let $A_i$ be the event that arm $i$ is eliminated before time $t_{m_i}$, where
		\begin{align*}
		m_i = \min\left\{j \in [M]: \text{arm }i\text{ has been pulled at least }\tau_i^\star \triangleq \frac{4\gamma\log(TK)}{\Delta_i^2}\text{ times before time }t_j\in\calT\right\},
		\end{align*}
		with the understanding that if the minimum does not exist, we set $m_i=M$ and the event $A_i$ occurs. 
Let $B$ be the event that arm $\star$ is not eliminated throughout the time horizon $T$. 
	The final ``good event" $E$ is defined as $E=(\cap_{i=1}^K A_i)\cap B$. We remark that $m_i$ is a random variable depending on the order in which the arms are eliminated. The following lemma shows that by our choice of $\gamma\ge 12$, the good event $E$ occurs with high probability. 
	\begin{lemma}\label{lemma.goodevents}
		The event $E$ happens with probability at least $1-\frac{2}{TK}$. 
	\end{lemma}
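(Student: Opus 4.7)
The plan is to introduce a clean high-probability event $F$ built from Gaussian concentration of the empirical means uniformly over arms and sample sizes, and then show deterministically that $F \subseteq E$. The probability bound $1 - 2/(TK)$ then comes from a direct union bound on $F^c$.

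First I would define, for each arm $i \in \{0, \ldots, K\}$ and each $n \in [T]$, the empirical mean $\bar Y^i_n$ from the first $n$ rewards of arm $i$ (well-defined as an i.i.d.\ $\calN(\mu^{(i)}, 1)$ average, independent of the policy). The Gaussian tail gives $\Pr[|\bar Y^i_n - \mu^{(i)}| \geq \sqrt{\gamma \log(TK)/(4n)}] \leq 2(TK)^{-\gamma/8}$, and a union bound over the $(K+1)T$ pairs together with the choice $\gamma \geq 12$ gives $\Pr[F] \geq 1 - 2/(TK)$ for
\[
F = \Big\{\,|\bar Y^i_n - \mu^{(i)}| < \sqrt{\gamma \log(TK)/(4n)} \text{ for all } i, n\,\Big\}.
\]
The footnoted rounding convention ensures that at each batch endpoint $t_m$ and for each $i \in \calA$, the quantity $\bar Y^i(t_m)$ used by the algorithm equals $\bar Y^i_{\tau_m}$ for a common $\tau_m \in [T]$, so $F$ directly controls every empirical mean the algorithm inspects.

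Next I would check that $F \subseteq B$: inductively assume $\star \in \calA$ at the start of batch $m$; then for any other active arm $k$, applying concentration to both $\bar Y^k(t_m)$ and $\bar Y^\star(t_m)$ yields
\[
\bar Y^k(t_m) - \bar Y^\star(t_m) < -\Delta_k + 2\sqrt{\gamma\log(TK)/(4\tau_m)} \leq \sqrt{\gamma\log(TK)/\tau_m},
\]
which falls strictly below the elimination threshold, so $\star$ survives batch $m$. Then I would check $F \subseteq A_i$ for each $i \in [K]$. The case $m_i = M$ holds by convention; if $m_i < M$, by definition of $m_i$ the common pull count $n := \tau_{m_i}$ satisfies $n \geq \tau_i^\star = 4\gamma\log(TK)/\Delta_i^2$, and by the previous step $\star$ is still active. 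On $F$,
\[
\bar Y^\star(t_{m_i}) - \bar Y^i(t_{m_i}) > \Delta_i - \sqrt{\gamma\log(TK)/n} \geq \Delta_i/2 \geq \sqrt{\gamma\log(TK)/n},
\]
where $n \geq \tau_i^\star$ is used in the last two inequalities. Thus $\bar Y^{\max}(t_{m_i}) - \bar Y^i(t_{m_i}) \geq \bar Y^\star(t_{m_i}) - \bar Y^i(t_{m_i})$ meets the elimination threshold, and arm $i$ is dropped no later than batch $m_i$.

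The only subtle point, and the main technical obstacle, is that the pull count $\tau_{i,j}$ of a specific arm at a specific batch is a random quantity depending on prior eliminations, so one cannot naively take a union bound over batch indices alone. The trick of defining $\bar Y^i_n$ over the full range $n \in [T]$ and taking a union bound over all $n$ fully decouples the concentration bound from the adaptive sample size, at the cost of only a polynomial-in-$T$ factor that the choice $\gamma \geq 12$ safely absorbs. With the two inclusions above in place, $F \subseteq B \cap \bigcap_i A_i = E$, completing the proof.
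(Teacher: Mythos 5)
Your overall strategy --- a single uniform concentration event $F$ over all arms and all possible pull counts $n\in[T]$, followed by a deterministic inclusion $F\subseteq E$ --- is sound, and the deterministic part is correct: the half-width $\sqrt{\gamma\log(TK)/(4n)}$ is exactly what is needed so that the pairwise deviations stay below the elimination threshold for $B$ and exceed it for each $A_i$ once $n\ge\tau_i^\star$. The device of indexing the concentration by the deterministic sample count $n$ rather than the batch index is the same trick the paper uses (there, phrased as a union over all realizations of the random pull count $\tau$).

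However, there is a genuine quantitative gap in the probability computation. Your per-event tail is $2(TK)^{-\gamma/8}$, so with $\gamma\ge 12$ the union bound over $(K+1)T$ pairs gives only
\[
\Prob[F^c]\;\le\;2(K+1)T\,(TK)^{-3/2}\;=\;\Theta\!\left(\tfrac{1}{\sqrt{TK}}\right),
\]
which is far larger than the claimed $2/(TK)$ (e.g.\ for $K=2$ and large $T$ it is $\Theta(T^{-1/2})$ versus $T^{-1}$). The requirement $(K+1)T\le (TK)^{\gamma/8-1}$ forces $\gamma$ to be at least about $24$, not $12$. The source of the loss is that you split the deviation budget $\sqrt{\gamma\log(TK)/n}$ equally between the two arms, each measured against variance $1/n$, which yields exponent $\gamma/8$; the paper instead applies the Gaussian tail directly to the \emph{difference} of the two empirical means, which is $\calN(\pm\Delta_i,2/\tau)$, and measures the full threshold $\sqrt{\gamma\log(TK)/\tau}$ against variance $2/\tau$, yielding exponent $\gamma/4=3$ per event. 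That factor of two in the exponent is precisely what makes $\gamma\ge 12$ suffice (even though the paper's union is over the larger set of $O(KT^2)$ realizations of time and pull count). To repair your argument while keeping $\gamma\ge 12$, replace the per-arm event $F$ with a uniform event on the pairwise differences $\bar Y^\star_n-\bar Y^i_n$ (or on any fixed pair, unioned over $i$ and $n$), i.e.\ adopt the paper's pairwise concentration inside your otherwise clean good-event framework.
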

	
	The proof of Lemma \ref{lemma.goodevents} is postponed to the supplementary materials. By Lemma \ref{lemma.goodevents}, the expected regret $R_T(\pi)$ (with $\pi=\pi_{\text{BaSE}}^1$ or $\pi_{\text{BaSE}}^2$) when the event $E$ does not occur is at most
	\begin{align}\label{eq.regret_badevent}
	\bE[R_T(\pi)\mathbbm{1}(E^c)] \le T\max_{i\in [K]}\Delta_i\cdot \bP(E^c) = O(1).  
	\end{align}
	Next we condition on the event $E$ and upper bound the regret $\bE[R_T(\pi_{\text{BaSE}}^1)\mathbbm{1}(E)]$ for the minimax grid $\calT_{\text{minimax}}$. The analysis of the geometric grid $\calT_{\text{geometric}}$ is entirely analogous, and is deferred to the supplementary materials. 
	
	For the policy $\pi_{\text{BaSE}}^1$, let $\calI_0\subseteq \calI$ be the (random) set of arms which are eliminated at the end of the first batch, $\calI_1\subseteq \calI$ be the (random) set of remaining arms which are eliminated before the last batch, and $\calI_2=\calI - \calI_0 - \calI_1$ be the (random) set of arms which remain in the last batch. It is clear that the total regret incurred by arms in $\calI_0$ is at most 
$
	t_1\cdot \max_{i\in [K]}\Delta_i = O(\sqrt{K}a), 
$
	and it remains to deal with the sets $\calI_1$ and $\calI_2$ separately. 
	
	For arm $i\in\calI_1$, let $\sigma_i$ be the (random) number of arms which are eliminated \emph{before} the arm $i$. Observe that the fraction of pullings of arm $i$ is at most $\frac{1}{K-\sigma_i}$ before arm $i$ is eliminated. Moreover, by the definition of $t_{m_i}$, we must have
	\begin{align*}
	\tau_i^\star > (\text{number of pullings of arm }i \text{ before }t_{m_i-1}) \ge \frac{t_{m_i-1}}{K} \Longrightarrow \Delta_i \sqrt{t_{m_i-1}} \le \sqrt{4\gamma K\log(TK)}. 
	\end{align*}
	Hence, the total regret incurred by pulling an arm $i\in\calI_1$ is at most (note that $t_j\le 2a\sqrt{t_{j-1}}$ for any $j=2,3,\cdots,M$ by the choice of the grid)
	\begin{align*}
	\Delta_i\cdot \frac{t_{m_i}}{K-\sigma_i} \le \Delta_i \cdot \frac{2a\sqrt{t_{m_i-1}}}{K-\sigma_i} \le \frac{2a\sqrt{4\gamma K\log(TK)}}{K-\sigma_i}. 
	\end{align*} 
	Note that there are at most $t$ elements in $(\sigma_i: i\in \calI_1)$ which are at least $K-t$ for any $t=2,\cdots,K$, the total regret incurred by pulling arms in $\calI_1$ is at most
	\begin{align}\label{eq.minimax_I1}
	\sum_{i\in \calI_1}\frac{2a\sqrt{4\gamma K\log(TK)}}{K-\sigma_i} \le 2a\sqrt{4\gamma K\log(TK)}\cdot \sum_{t=2}^K \frac{1}{t} \le 2a\log K\sqrt{4\gamma K\log(TK)}.
	\end{align}
	
	For any arm $i\in \calI_2$, by the previous analysis we know that $\Delta_i\sqrt{t_{M-1}}\le \sqrt{4\gamma K\log(TK)}$. Hence, let $T_i$ be the number of pullings of arm $i$, the total regret incurred by pulling arm $i\in\calI_2$ is at most 
	\begin{align*}
	\Delta_iT_i \le T_i\sqrt{\frac{4\gamma K\log(TK)}{t_{M-1}}} \le \frac{T_i}{T}\cdot 2a\sqrt{4\gamma K\log(TK)},
	\end{align*}
	where in the last step we have used that $T=t_M\le 2a\sqrt{t_{M-1}}$ in the minimax grid $\calT_{\text{minimax}}$. Since $\sum_{i\in \calI_2} T_i\le T$, the total regret incurred by pulling arms in $\calI_2$ is at most
	\begin{align}\label{eq.minimax_I2}
	\sum_{i\in\calI_2} \frac{T_i}{T}\cdot 2a\sqrt{4\gamma K\log(TK)} \le 2a\sqrt{4\gamma K\log(TK)} .
	\end{align}
	
	By \eqref{eq.minimax_I1} and \eqref{eq.minimax_I2}, the inequality
	\begin{align*}
	R_T(\pi_{\text{BaSE}}^1)\mathbbm{1}(E) \le 2a\sqrt{4\gamma K\log(TK)}(\log K+1) + O(\sqrt{K}a)
	\end{align*}
	holds almost surely. Hence, this inequality combined with \eqref{eq.regret_badevent} and the choice of $a$ in \eqref{eq.choice_ab} yields the desired upper bound \eqref{eq.BaSE_regret_minimax}. 
\end{proof}

\section{Lower Bound}\label{sec.lower_bound}
This section presents lower bounds for the batched multi-armed bandit problem, where in Section \ref{subsec.lower_bound_static} we design a fixed multiple hypothesis testing problem to show the lower bound for any policies under static grids, while in Section \ref{subsec.lower_bound_datadriven} we construct different hypotheses for different policies under general adaptive grids. 

\subsection{Static grid} \label{subsec.lower_bound_static}
The proof of Theorem \ref{thm.lower_bound} relies on the following lemma. 
\begin{lemma}\label{lemma.lower_bound}
	For any static grid $0=t_0<t_1<\cdots<t_M=T$ and the smallest gap $\Delta\in (0,\sqrt{K}]$, the following minimax lower bound holds for any policy $\pi$ under this grid: 
	\begin{align*}
	\sup_{\{\mu^{(i)}\}_{i=1}^K: \Delta_i\in \{0\}\cup [\Delta,\sqrt{K}]} \bE[R_T(\pi)] \ge \Delta\cdot \sum_{j=1}^M \frac{t_j-t_{j-1}}{4}\exp\left(-\frac{2t_{j-1}\Delta^2}{K-1}\right). 
	\end{align*}
\end{lemma}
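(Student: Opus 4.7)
The plan is to lower-bound the supremum by the maximum, and then by the average, over a specific family of $K$ hypotheses. For $k\in[K]$, let $H_k$ denote the instance in which arm $k$ has reward distribution $\calN(\Delta,1)$ and every other arm has $\calN(0,1)$; each $H_k$ lies in the parameter set of the sup (its non-zero gaps all equal $\Delta\le\sqrt K$). With $n_j \triangleq t_j-t_{j-1}$ and $N_{k,j}$ denoting the number of pulls of arm $k$ in the $j$-th batch, the starting point is
$$
\sup_\mu \bE_\mu[R_T(\pi)] \;\ge\; \frac{1}{K}\sum_{k=1}^K \bE_k[R_T(\pi)] \;=\; \frac{\Delta}{K}\sum_{j=1}^M \sum_{k=1}^K \bE_k\bigl[n_j - N_{k,j}\bigr].
$$
The crucial consequence of the \emph{static} grid assumption is that $N_{k,j}$ is measurable with respect to the $\sigma$-algebra generated by the rewards observed in batches $1,\dots,j-1$, since the allocation inside batch $j$ is determined from data collected before batch $j$.

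Next I would change measure batch by batch against the null $\bP_0$ under which every arm is $\calN(0,1)$. Because $H_0$ and $H_k$ differ only through arm $k$, the chain rule for Gaussian KL gives
$$
D_{\mathrm{KL}}\bigl(\bP_0\,\|\,\bP_k\bigr)\big|_{t\le t_{j-1}} \;=\; \tfrac{\Delta^2}{2}\,\bE_0\bigl[N_{k,<j}\bigr],\qquad N_{k,<j}\triangleq\sum_{\ell<j}N_{k,\ell}.
$$
Applying the Bretagnolle--Huber inequality to the event $A_{k,j}=\{N_{k,j}\ge n_j/2\}$ (measurable with respect to the data up to $t_{j-1}$ by the batched structure) gives $\bP_0(A_{k,j})+\bP_k(A_{k,j}^c)\ge \tfrac12\exp(-D_{\mathrm{KL}}|_{t_{j-1}})$. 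Conditioning on $A_{k,j}^c$ under $H_k$, arm $k$ is pulled at most $n_j/2$ times in batch $j$, so the batch contributes at least $\Delta n_j/2$ to the regret; consequently $\bE_k[n_j-N_{k,j}]\ge (n_j/2)\,\bP_k(A_{k,j}^c)$.

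Summing over $k$ then combines two elementary ingredients. First, pigeonhole: since $\sum_k N_{k,j}=n_j$, at most two arms can simultaneously satisfy $N_{k,j}\ge n_j/2$, so $\sum_k \bP_0(A_{k,j})\le 2$. Second, Jensen's inequality for the convex map $x\mapsto e^{-x}$ together with $\sum_k \bE_0[N_{k,<j}]=t_{j-1}$ yields
$$
\sum_{k=1}^K \exp\!\Bigl(-\tfrac{\Delta^2}{2}\bE_0[N_{k,<j}]\Bigr)\;\ge\; K\exp\!\Bigl(-\tfrac{\Delta^2 t_{j-1}}{2K}\Bigr).
$$
Plugging back, dividing by $K$, and summing over $j$ yields a bound of the form $\tfrac{\Delta}{4}\sum_j n_j\exp(-\Delta^2 t_{j-1}/(2K))-O(\Delta T/K)$, and since $\Delta^2/(2K)\le 2\Delta^2/(K-1)$ the main term already exceeds the RHS of the lemma.

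The principal technical nuisance I anticipate is \emph{absorbing the pigeonhole deficit}---the $-2$ in $\sum_k \bP_0(A_{k,j})\le 2$, which accumulates to an additive $O(\Delta T/K)$ once summed over batches---into the main term. I suspect this is exactly why the lemma states the somewhat loose exponent $2\Delta^2 t_{j-1}/(K-1)$ rather than the tighter $\Delta^2 t_{j-1}/(2K)$ that Bretagnolle--Huber produces natively: the slack of roughly $4K/(K-1)$ between the two exponentials provides enough room to swallow both the $\tfrac12$ prefactor from Bretagnolle--Huber and the $O(\Delta T/K)$ pigeonhole correction, while leaving the clean prefactor $\tfrac14$ of the stated bound.
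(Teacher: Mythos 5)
Your setup is sound up to the last step: the per-batch regret decomposition, the measurability of $N_{k,j}$ with respect to the pre-batch data, the divergence decomposition, Bretagnolle--Huber, the pigeonhole count, and the Jensen step are all correct. The genuine gap is exactly the one you flag and then wave away: the additive pigeonhole deficit \emph{cannot} be absorbed by the slack between the exponents $\Delta^2 t_{j-1}/(2K)$ and $2\Delta^2 t_{j-1}/(K-1)$, because that slack vanishes when $t_{j-1}=0$. Concretely, your bound for batch $j$ is $\frac{\Delta n_j}{4}e^{-\Delta^2 t_{j-1}/(2K)}-\frac{\Delta n_j}{K}$, so for $j=1$ (where $t_0=0$) you get $\frac{\Delta t_1}{4}-\frac{\Delta t_1}{K}$, which is nonpositive for $K\le 4$; taking $M=1$ the lemma demands $\frac{\Delta T}{4}$ while your argument yields $\frac{\Delta T}{4}-\frac{\Delta T}{K}\le 0$ for $K\in\{2,3,4\}$. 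Later batches cannot rescue this, since their own main terms are needed to cover their own share of the right-hand side. Tightening the threshold in $A_{k,j}$ to a strict inequality only halves the deficit and still fails at $K=2$, $M=1$. So the approach as proposed does not prove the stated inequality (nor even a version with a smaller absolute constant, uniformly in $K$).

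The paper avoids this by never incurring an additive loss. It works per time step with the events $\{\pi_t\neq i\}$ and a multiple-testing bound of the form $\frac{1}{K}\sum_i Q_i(\Psi\neq i)\ge \frac{1}{K}\int\bigl[\sum_i dQ_i-\max_i dQ_i\bigr]\ge\sum_{(i,j)\in E}\frac{1}{2K}\exp(-D_{\text{KL}}(Q_i\|Q_j))$ over the edges of a star (Lemma \ref{lemma.testing_lower_bound} plus the graph inequality of Lemma \ref{lemma.majorization}); the $\int\min\{dQ_i,dQ_j\}\ge\frac12 e^{-D_{\text{KL}}}$ terms enter purely multiplicatively, so the clean $\frac{K-1}{2K}\ge\frac14$ prefactor survives. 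The hypotheses are also built asymmetrically ($P_1$ keeps arm $1$ at mean $\Delta$, and $P_i$ boosts arm $i$ to $2\Delta$) so that the star can be centered at $P_1$ and each edge KL involves only the pulls of a single arm $i\ge 2$, giving $\sum_{i\ge 2}\bE[T_i(t)]\le t_{j-1}$ and hence the $(K-1)$ in the exponent. If you want to salvage your per-batch route, you would need to replace the Bretagnolle--Huber-plus-pigeonhole step with a bound of this multiplicative type; as written, the proof does not go through.
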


We first show that Lemma \ref{lemma.lower_bound} implies Theorem \ref{thm.lower_bound} by choosing the smallest gap $\Delta>0$ appropriately. By definitions of the minimax regret $R_{\text{min-max}}^\star$ and the problem-dependent regret $R_{\text{pro-dep}}^\star$, choosing $\Delta = \Delta_j = \sqrt{(K-1)/(t_{j-1}+1)} \in [0,\sqrt{K}]$ in Lemma \ref{lemma.lower_bound} yields that
\begin{align*}
R_{\text{min-max}}^\star(K,M,T) &\ge c_0\sqrt{K}\cdot \max_{j\in [M]}\frac{t_j}{\sqrt{t_{j-1}+1}}, \\
R_{\text{pro-dep}}^\star(K,M,T) &\ge c_0K\cdot \max_{j\in [M]}\frac{t_j}{t_{j-1}+1},
\end{align*}
for some numerical constant $c_0>0$. Since $t_0=0, t_M=T$, the lower bounds in Theorem \ref{thm.lower_bound} follow. 

Next we employ the general idea of the multiple hypothesis testing to prove Lemma \ref{lemma.lower_bound}. Consider the following $K$ candidate reward distributions: 
\begin{align*}
P_1 &= \calN(\Delta,1) \otimes \calN(0,1) \otimes \calN(0,1) \otimes \cdots \otimes \calN(0,1), \\
P_2 &= \calN(\Delta,1) \otimes \calN(2\Delta,1) \otimes \calN(0,1) \otimes \cdots \otimes \calN(0,1), \\
P_3 &= \calN(\Delta,1) \otimes \calN(0,1) \otimes \calN(2\Delta,1) \otimes \cdots \otimes \calN(0,1), \\
& \qquad \qquad \qquad \qquad \qquad \quad \vdots  \\
P_K &= \calN(\Delta,1) \otimes \calN(0,1) \otimes \calN(0,1) \otimes \cdots \otimes \calN(2\Delta,1). 
\end{align*}
We remark that this construction is not entirely symmetric, where the reward distribution of the first arm is always $\calN(\Delta,1)$. The key properties of this construction are as follows: 
\begin{enumerate}
	\item For any $i\in [K]$, arm $i$ is the optimal arm under reward distribution $P_i$; 
	\item For any $i\in [K]$, pulling a wrong arm incurs a regret at least $\Delta$ under reward distribution $P_i$. 
\end{enumerate}
As a result, since the average regret serves as a lower bound of the worst-case regret, we have
\begin{align}\label{eq.bayes_lower_bound}
\sup_{\{\mu^{(i)}\}_{i=1}^K: \Delta_i\in \{0\}\cup [\Delta,\sqrt{K}]} \bE R_T(\pi) \ge \frac{1}{K}\sum_{i=1}^K \sum_{t=1}^T \bE_{P_i^t} R^t(\pi) \ge \Delta\sum_{t=1}^T\frac{1}{K}\sum_{i=1}^K P_i^t(\pi_t \neq i),  
\end{align}
where $P_i^t$ denotes the distribution of observations available at time $t$ under $P_i$, and $R^t(\pi)$ denotes the instantaneous regret incurred by the policy $\pi_t$ at time $t$. Hence, it remains to lower bound the quantity $\frac{1}{K}\sum_{i=1}^K P_i^t(\pi_t \neq i)$ for any $t\in [T]$, which is the subject of the following lemma. 
\begin{lemma}\label{lemma.testing_lower_bound}
	Let $Q_1,\cdots,Q_n$ be probability measures on some common probability space $(\Omega,\calF)$, and $\Psi: \Omega\to [n]$ be any measurable function (i.e., test). Then for any tree $T=([n],E)$ with vertex set $[n]$ and edge set $E$, we have
	\begin{align*}
	\frac{1}{n}\sum_{i=1}^n Q_i(\Psi\neq i) \ge \sum_{(i,j)\in E} \frac{1}{2n}\exp(-D_{\text{\rm KL}}(Q_i\|Q_j)). 
	\end{align*}
\end{lemma}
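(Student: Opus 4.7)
The plan is to apply the Bretagnolle--Huber inequality to each edge of the tree, but with the testing event chosen so that the edge-sum telescopes cleanly over vertices. Recall Bretagnolle--Huber: for any two probability measures $P, Q$ on $(\Omega, \calF)$ and any event $A \in \calF$,
\[
P(A) + Q(A^c) \geq \frac{1}{2}\exp(-D_{\mathrm{KL}}(P\|Q)).
\]

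For each edge $(i,j) \in E$, removing it from the tree $T$ disconnects $T$ into two components; let $V_i^{(ij)}$ denote the vertex set of the component containing $i$ and $V_j^{(ij)}$ the one containing $j$, so that these two sets partition $[n]$. I would apply Bretagnolle--Huber with $P = Q_i$, $Q = Q_j$, and $A = \{\Psi \in V_j^{(ij)}\}$ (so that $A^c = \{\Psi \in V_i^{(ij)}\}$), obtaining
\[
Q_i(\Psi \in V_j^{(ij)}) + Q_j(\Psi \in V_i^{(ij)}) \geq \frac{1}{2}\exp(-D_{\mathrm{KL}}(Q_i\|Q_j)).
\]

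Next I would sum this inequality over all edges and regroup by vertex. The key combinatorial fact is: if we root the tree at any vertex $i$, its neighbors are the roots of pairwise disjoint subtrees whose union is $[n]\setminus\{i\}$. Equivalently, as $j$ ranges over the neighbors of $i$, the sets $V_j^{(ij)}$ partition $[n]\setminus\{i\}$. Consequently,
\[
\sum_{j:\,(i,j) \in E} Q_i(\Psi \in V_j^{(ij)}) = Q_i(\Psi \in [n]\setminus\{i\}) = Q_i(\Psi \neq i).
\]
Summing the edge inequalities and applying this identity to both halves, the LHS collapses exactly to $\sum_{i=1}^n Q_i(\Psi \neq i)$, and dividing by $n$ yields the stated bound.

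The main subtlety---and the reason the constant $\tfrac{1}{2n}$ can be achieved---is the choice of event $A = \{\Psi \in V_j^{(ij)}\}$ for each edge. A naive choice such as $A = \{\Psi = i\}$ gives $Q_i(\Psi \neq i) + Q_j(\Psi \neq j)$ on the LHS (after using $\{\Psi = i\} \subseteq \{\Psi \neq j\}$), whose sum over edges equals $\sum_i \deg_T(i) Q_i(\Psi \neq i)$. This is only controllable via $\max_i \deg_T(i)$, losing factors that are linear in $n$ for high-degree trees such as the star. The point of the lemma is that the tree partition matches the partition of $\Omega$ induced by $\Psi$ in a way that produces an exact telescoping identity, eliminating any degree-dependent loss.
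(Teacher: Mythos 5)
Your proof is correct, but it takes a genuinely different route from the paper's. The paper first passes to the Bayes-optimal (maximum-likelihood) test, writing $\frac{1}{n}\sum_i Q_i(\Psi\neq i)\ge \frac{1}{n}\int\bigl[\sum_i dQ_i-\max_i dQ_i\bigr]$, then applies a deterministic majorization inequality for trees, $\sum_i x_i-\max_i x_i\ge\sum_{(i,j)\in E}\min\{x_i,x_j\}$ (Lemma~\ref{lemma.majorization}), pointwise to the densities, and finishes with $\int\min\{dQ_i,dQ_j\}=1-\mathsf{TV}(Q_i,Q_j)\ge\frac{1}{2}\exp(-D_{\mathrm{KL}}(Q_i\|Q_j))$ (Lemma~\ref{lemma.TV_KL}). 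You instead keep the arbitrary test $\Psi$ and apply Bretagnolle--Huber once per edge, with the testing event chosen as the component of the tree obtained by deleting that edge; the identity $\sum_{j:(i,j)\in E}Q_i(\Psi\in V_j^{(ij)})=Q_i(\Psi\neq i)$ (the subtrees hanging off $i$ partition $[n]\setminus\{i\}$) then makes the edge-sum collapse exactly to $\sum_i Q_i(\Psi\neq i)$, with no degree-dependent loss. The two arguments use the same analytic ingredient ($1-\mathsf{TV}\ge\tfrac12 e^{-D_{\mathrm{KL}}}$, which is exactly Bretagnolle--Huber) and essentially the same combinatorial fact about trees, packaged differently: the paper's version operates on densities and requires identifying the optimal test, while yours operates on events, avoids the auxiliary majorization lemma, and makes transparent why a tree---rather than all pairs---is the right structure, since each error event $\{\Psi\neq i\}$ is partitioned exactly by the components created when $i$ is deleted. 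Both give the identical constant, and since both pass through the symmetric quantity $1-\mathsf{TV}$, neither is sensitive to the orientation of the KL divergence on each edge. One cosmetic slip in your aside on the ``naive'' choice: with $A=\{\Psi=i\}$ the left side of Bretagnolle--Huber reads $Q_i(\Psi=i)+Q_j(\Psi\neq i)$, so you presumably meant $A=\{\Psi\neq i\}$ there; this does not affect your main argument.
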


The proof of Lemma \ref{lemma.testing_lower_bound} is deferred to the supplementary materials, and we make some remarks below. 
\begin{remark}
	A more well-known lower bound for $\frac{1}{n}\sum_{i=1}^n Q_i(\Psi\neq i)$ is the Fano's inequality \cite{Cover--Thomas2006}, which involves the mutual information $I(U;X)$ with $U\sim \mathsf{Uniform}([n])$ and $P_{X|U=i}=Q_i$. However, since $I(U;X)= \bE_{P_U}D_{\text{\rm KL}}(P_{X|U}\|P_X)$, Fano's inequality gives a lower bound which depends linearly on the pairwise KL divergence rather than exponentially and is thus loose for our purpose.
\end{remark}
\begin{remark}
	An alternative lower bound is to use $\frac{1}{2n^2}\sum_{i\neq j}\exp(-D_{\text{\rm KL}}(Q_i\|Q_j))$, i.e., the summation is taken over all pairs $(i,j)$ instead of just the edges in a tree. However, this bound is weaker than Lemma \ref{lemma.testing_lower_bound}, and in the case where $Q_i=\calN(i\Delta,1)$ for some large $\Delta>0$, Lemma \ref{lemma.testing_lower_bound} with the tree $T=([n],\{(1,2),(2,3),\cdots,(n-1,n)\})$ is tight (giving the rate $(\exp(-O(\Delta^2))$) while the alternative bound loses a factor of $n$ (giving the rate $\exp(-O(\Delta^2))/n$). 
\end{remark}

To lower bound \eqref{eq.bayes_lower_bound}, we apply Lemma \ref{lemma.testing_lower_bound} with the star tree $T=([n], \{(1,i): 2\le i\le n\})$. For $i\in [K]$, denote by $T_i(t)$ the number of pulls of arm $i$ anterior to the current batch of $t$. Hence, $\sum_{i=1}^K T_i(t)= t_{j-1}$ if $t\in (t_{j-1},t_j]$. Moreover, since $D_{\text{KL}}(P_1^t\|P_i^t) = 2\Delta^2\bE_{P_1^t}T_i(t)$, we have
\begin{align}
\frac{1}{K}\sum_{i=1}^K P_i^t(\pi_t \neq i) &\ge \frac{1}{2K}\sum_{i=2}^K \exp(-D_{\text{KL}}(P_1^t\|P_i^t)) = \frac{1}{2K}\sum_{i=2}^K \exp(-2\Delta^2\bE_{P_1^t} T_i(t)) \nonumber\\
&\ge \frac{K-1}{2K}\exp\left(-\frac{2\Delta^2}{K-1} \bE_{P_1^t}\sum_{i=2}^K T_i(t)\right) \ge \frac{1}{4}\exp\left(-\frac{2\Delta^2t_{j-1}}{K-1}\right). \label{eq.lower_bound}
\end{align} 
Now combining \eqref{eq.bayes_lower_bound} and \eqref{eq.lower_bound} completes the proof of Lemma \ref{lemma.lower_bound}. 

\subsection{Adaptive grid} \label{subsec.lower_bound_datadriven}
Now we investigate the case where the grid may be randomized, and be generated sequentially in an adaptive manner. Recall that in the previous section, we construct multiple fixed hypotheses and show that no policy under a static grid can achieve a uniformly small regret under all hypotheses. However, this argument breaks down even if the grid is only randomized but \emph{not} adaptive, due to the non-convex (in $(t_1,\cdots,t_M)$) nature of the lower bound in Lemma \ref{lemma.lower_bound}. In other words, we might not hope for a single fixed multiple hypothesis testing problem to work for \emph{all} policies. To overcome this difficulty, a subroutine in the proof of Theorem \ref{thm.lower_bound_datadriven} is to construct appropriate hypotheses \emph{after} the policy is given (cf. the proof of Lemma \ref{lemma.pj_large}). We sketch the proof below. 

We shall only prove the lower bound for the minimax regret, where the analysis of the problem-dependent regret is entirely analogous. Consider the following time $T_1,\cdots,T_M\in [1,T]$ and gaps $\Delta_1, \cdots, \Delta_M\in (0, \sqrt{K}]$ with 
\begin{align}\label{eq.Delta_j}
T_j = \lfloor T^{\frac{1-2^{-j}}{1-2^{-M}}} \rfloor,\qquad \Delta_j = \frac{\sqrt{K}}{36M}\cdot T^{-\frac{1-2^{1-j}}{2(1-2^{-M})}}, \qquad j\in [M]. 
\end{align} 
Let $\calT=\{t_1,\cdots,t_M\}$ be any adaptive grid, and $\pi$ be any policy under the grid $\calT$. For each $j\in [M]$, we define the event $A_j=\{t_{j-1}< T_{j-1}, t_j\ge T_j\}$ under policy $\pi$ with the convention that $t_0 = 0, t_M=T$. Note that the events $A_1,\cdots,A_M$ form a partition of the entire probability space. We also define the following family of reward distributions: for $j\in [M-1], k\in [K-1]$ let
\begin{align*}
P_{j,k} = \calN(0,1)\otimes \cdots \otimes \calN(0,1) \otimes \calN(\Delta_j+\Delta_M,1) \otimes \calN(0,1) \otimes \cdots \otimes\calN(0,1) \otimes \calN(\Delta_M,1), 
\end{align*}
where the $k$-th component of $P_{j,k}$ has a non-zero mean. For $j=M$, we define 
\begin{align*}
P_M=\calN(0,1)\otimes \cdots\otimes \calN(0,1)\otimes \calN(\Delta_M,1).
\end{align*}
Note that this construction ensures that $P_{j,k}$ and $P_M$ only differs in the $k$-th component, which is crucial for the indistinguishability results in Lemma \ref{lemma.pj_small}. 

We will be interested in the following quantities: 
\begin{align*}
p_j = \frac{1}{K-1}\sum_{k=1}^{K-1}P_{j,k}(A_j), \quad j\in [M-1], \qquad p_M = P_M(A_M),
\end{align*}
where $P_{j,k}(A)$ denotes the probability of the event $A$ given the true reward distribution $P_{j,k}$ and the policy $\pi$. The importance of these quantities lies in the following lemmas. 

\begin{lemma}\label{lemma.pj_large}
	If $p_j \ge \frac{1}{2M}$ for some $j\in [M]$, then we have 
	\begin{align*}
	\sup_{\{\mu^{(i)}\}_{i=1}^K: \Delta_i\le \sqrt{K} } \bE[R_T(\pi)] \ge cM^{-2}\cdot \sqrt{K}T^{\frac{1}{2-2^{1-M}}},
	\end{align*}
	where $c>0$ is a numerical constant independent of $(K,M,T)$ and $(\pi,\calT)$. 
\end{lemma}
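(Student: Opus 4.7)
The plan is to split into the regimes $j<M$ and $j=M$ separately, in each case reducing to a Le Cam--style comparison at the stopping time $\tau_j := t_{j-1} \wedge T_{j-1}$. The uniform ingredient is the following KL estimate: since $P_{j,k}$ differs from $P_M$ only in arm $k$, the chain rule at $\tau_j$ gives
\[
\sum_{k=1}^{K-1} D_{\mathrm{KL}}\!\bigl(P_M|_{\calF_{\tau_j}} \,\big\|\, P_{j,k}|_{\calF_{\tau_j}}\bigr) = \tfrac{(\Delta_j+\Delta_M)^2}{2} \sum_{k=1}^{K-1} \bE_{P_M}\bigl[N_k^{\tau_j}\bigr] \le 2\Delta_j^2 T_{j-1}.
\]
The calibration \eqref{eq.Delta_j} is such that $\Delta_j^2 T_{j-1} = K/(1296M^2)$; Pinsker followed by Cauchy--Schwarz then yields $(K-1)^{-1}\sum_k \mathsf{TV}_k = O(1/M)$, where $\mathsf{TV}_k := \mathsf{TV}(P_{j,k}|_{\calF_{\tau_j}}, P_M|_{\calF_{\tau_j}})$, and in particular $P_M(A_j) \ge p_j - O(1/M) \ge c/M$ for some absolute $c>0$ whenever $p_j \ge 1/(2M)$.

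For the case $j<M$, on $A_j$ the window $(T_{j-1}, T_j]$ lies entirely inside batch $j$, so the count $\tilde N_k^{(j)} := \sum_{t=T_{j-1}+1}^{T_j} \mathbbm{1}\{\pi_t = k\}$ is $\calF_{\tau_j}$-measurable on $A_j$ and vanishes off $A_j$; since arm $k$ is optimal under $P_{j,k}$ with gap at least $\Delta_j$ over every other arm,
\[
\bE_{P_{j,k}}[R_T] \ge \Delta_j \, \bE_{P_{j,k}}\!\bigl[(T_j - T_{j-1} - \tilde N_k^{(j)}) \mathbbm{1}_{A_j}\bigr].
\]
The bounded-oscillation TV inequality (oscillation $\le T_j - T_{j-1}$), combined with the combinatorial identity $\sum_{k=1}^{K-1}\tilde N_k^{(j)} \le T_j - T_{j-1}$, produces $(K-1)^{-1}\sum_k \bE_{P_M}[(T_j - T_{j-1} - \tilde N_k^{(j)})\mathbbm{1}_{A_j}] \ge \tfrac{K-2}{K-1}(T_j - T_{j-1})P_M(A_j)$, and a short arithmetic step yields averaged regret of order $\Delta_j(T_j - T_{j-1})/M$. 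Since $T_j - T_{j-1} \ge T_j/2$ for $T$ large and $\Delta_j T_j = \sqrt{K}\,T^{1/(2-2^{1-M})}/(36M)$, taking the supremum over $k$ matches the claimed $cM^{-2}\sqrt{K}\,T^{1/(2-2^{1-M})}$ bound.

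For $j=M$ I would split on whether $\bE_{P_M}[N_K^{(M)}\mathbbm{1}_{A_M}]$ is at most or greater than $(T - T_{M-1})P_M(A_M)/2$. In the first sub-case the regret under $P_M$ itself is $\ge \Delta_M(T - T_{M-1})P_M(A_M)/2$, which is already of the required order. In the second I introduce auxiliary hypotheses $Q_k$ ($k\in[K-1]$) in which arm $k$ has mean $2\Delta_M$, arm $K$ has mean $\Delta_M$, and the remaining arms have mean $0$; these are admissible since $2\Delta_M \le \sqrt{K}$, and the same stopping-time KL computation (with $\Delta_j$ replaced by $\Delta_M$) gives average TV of order $1/M$ between $P_M$ and $Q_k$. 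The largeness of $\bE_{P_M}[N_K^{(M)}\mathbbm{1}_{A_M}]$ then transfers to $\bE_{Q_k}[N_K^{(M)}\mathbbm{1}_{A_M}]$; since under $Q_k$ arm $K$ is suboptimal with gap exactly $\Delta_M$, we obtain $\bE_{Q_k}[R_T] \ge \Delta_M\bE_{Q_k}[N_K^{(M)}\mathbbm{1}_{A_M}]$ of the target order after averaging over $k$ and taking the supremum.

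The main obstacle is the bookkeeping around the random batch endpoints: one must verify that $\mathbbm{1}_{A_j}$, $\tilde N_k^{(j)}\mathbbm{1}_{A_j}$, and $N_K^{(M)}\mathbbm{1}_{A_M}$ are all $\calF_{\tau_j}$-measurable (so that the stopping-time TV bound can be invoked), and the numerical constants must be calibrated so that the assumed $p_j \ge 1/(2M)$ strictly dominates the TV slack of order $1/M$ produced by Pinsker---which is precisely the role of the factor $36$ in the definition of $\Delta_j$ in \eqref{eq.Delta_j}. A secondary subtlety is that the combinatorial step $\tfrac{K-2}{K-1}\ge\tfrac{1}{2}$ requires $K\ge 3$, so the $K=2$ case needs a small variant in which one of $\tilde N_1^{(j)}$ or $T_j-T_{j-1}-\tilde N_1^{(j)}$ is identified as large via a two-sub-case split analogous to the one used for $j=M$.
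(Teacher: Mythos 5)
Your plan is sound and reaches the right bound, but it takes a genuinely different route from the paper. The paper never changes measure to $P_M$ inside this lemma: for each $k$ it builds a $K$-ary family $(Q_{j,k,\ell})_{\ell\in[K]}$ by adding $3\Delta_j$ to the mean of arm $\ell$ in $P_{j,k}$, invokes Lemma \ref{lemma.testing_lower_bound} with a star graph centered at $k$, localizes the resulting $\int\min\{dQ_{j,k,k}^{T_j},dQ_{j,k,\ell}^{T_j}\}$ terms to $A_j$ via the bound $\int_{A_j}\min\{dP,dQ\}\ge P(A_j)-\tfrac{3}{2}\mathsf{TV}(P,Q)$, and then averages over $k$ so that $p_j=\tfrac{1}{K-1}\sum_k P_{j,k}(A_j)$ appears directly. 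You instead work only with the original hypotheses $P_{j,k}$ and $P_M$: you first transfer $p_j\ge\tfrac{1}{2M}$ into $P_M(A_j)\ge c/M$ (essentially the computation the paper reserves for Lemma \ref{lemma.pj_small}), then run a pigeonhole on the pull counts $\tilde N_k^{(j)}$ under the single reference measure $P_M$ to show that on average a $\tfrac{K-2}{K-1}$ fraction of the window $(T_{j-1},T_j]$ is spent on arms suboptimal under $P_{j,k}$, and transfer back. What your approach buys is the elimination of the auxiliary perturbed hypotheses and of the testing lemma in the main case; what it costs is uniformity: the paper's construction works verbatim for all $K\ge 2$ and treats $j=M$ as a one-line analogue, whereas your pigeonhole degenerates at $K=2$ (where $\tfrac{K-2}{K-1}=0$) and your $j=M$ case both require auxiliary hypotheses and a two-sub-case split after all --- at which point you have partially reconstructed the paper's perturbation argument. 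Both proofs rest on the same two pillars, the calibration $\Delta_j^2T_{j-1}=\Theta(K/M^2)$ from \cref{eq.Delta_j} giving average TV of order $1/M$, and the fact that on $A_j$ everything in the window $(T_{j-1},T_j]$ is measurable with respect to the observations up to $T_{j-1}$ plus external randomness; your constants check out (the slack $\tfrac{1}{18M}$ is strictly dominated by $p_j\ge\tfrac{1}{2M}$), so the remaining work is exactly the bookkeeping and the $K=2$, $j=M$ patches you identify.
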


\begin{lemma}\label{lemma.pj_small}
	The following inequality holds: 
	$
	\sum_{j=1}^M p_j \ge \frac{1}{2}. 
	$
\end{lemma}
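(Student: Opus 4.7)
The plan is a change-of-measure argument that treats $P_M$ as a common reference distribution and compares each $P_{j,k}$ (for $j<M$) to $P_M$ on the event $A_j$. This choice of reference is natural because $p_M$ already uses $P_M$, and $P_{j,k}$ differs from $P_M$ only in the mean of arm $k$ (by exactly $\Delta_j$), which makes the Kullback--Leibler divergence easy to control. The first ingredient is a covering identity: for any realization, let $j^\star$ be the smallest index with $t_{j^\star}\ge T_{j^\star}$, which exists since $t_M=T=T_M$; by minimality of $j^\star$ we have $t_{j^\star-1}<T_{j^\star-1}$, so $A_{j^\star}$ holds. Hence $\bigcup_{j=1}^M A_j=\Omega$ and by the union bound $\sum_{j=1}^M P_M(A_j)\ge 1$.

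For the change of measure I would next observe that, for $j\le M-1$, the event $A_j$ is measurable with respect to observations in the first $T_{j-1}$ time steps. Indeed, on $A_j$ we have $t_{j-1}<T_{j-1}$; and since $t_{j-1}$ is determined by data before $t_{j-2}$ while $t_j$ is determined by data before $t_{j-1}<T_{j-1}$, the indicator $\mathbf{1}_{A_j}$ is a function of $(Y_1,\dots,Y_{T_{j-1}})$. Denoting by $P^{T_{j-1}}$ the restriction of a law to these observations, and applying Jensen's inequality (concavity of $\sqrt{\cdot}$) together with Pinsker, I would get
\[
\frac{1}{K-1}\sum_{k=1}^{K-1}|P_{j,k}(A_j)-P_M(A_j)|
\;\le\;\sqrt{\frac{1}{2(K-1)}\sum_{k=1}^{K-1}D_{\mathrm{KL}}\!\left(P_M^{T_{j-1}}\,\big\|\,P_{j,k}^{T_{j-1}}\right)}.
\]
The KL chain rule gives $D_{\mathrm{KL}}(P_M^{T_{j-1}}\|P_{j,k}^{T_{j-1}})=\tfrac{(\Delta_j+\Delta_M)^2}{2}\,\bE_{P_M} T_k(T_{j-1})$, where $T_k(t)$ counts pulls of arm $k$ through time $t$, and summing over $k$ I would use $\sum_{k=1}^{K-1} T_k(T_{j-1})\le T_{j-1}$. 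The calibration in \eqref{eq.Delta_j} was designed precisely so that $\Delta_j^2 T_{j-1}=K/(36M)^2$; since the exponent in $\Delta_j$ is decreasing in $j$ we also have $\Delta_M\le \Delta_j$, so the displayed bound becomes $|p_j-P_M(A_j)|\le \sqrt{2}/(36M)$ for every $j<M$. Combined with $p_M=P_M(A_M)$ and the covering inequality,
\[
\sum_{j=1}^M p_j \;\ge\; \sum_{j=1}^M P_M(A_j)\;-\;(M-1)\cdot\frac{\sqrt{2}}{36M}\;\ge\;1-\frac{\sqrt{2}}{36}\;>\;\frac{1}{2}.
\]

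The main subtlety will be the measurability claim: the random grid makes it tricky to pin down the sigma-algebra carrying $A_j$, and the key point is that on $A_j$ one only needs to peek at the first $T_{j-1}$ observations, so the KL divergence accumulates only over a bounded horizon. Once this is in hand, the choices of $(\Delta_j,T_j)$ in \eqref{eq.Delta_j} are calibrated exactly so that the per-batch deviation $|p_j-P_M(A_j)|$ is $O(1/M)$, keeping the total perturbation across $M-1$ batches below an absolute constant well less than $1/2$; the rest is routine.
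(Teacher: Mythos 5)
Your proposal is correct and follows essentially the same route as the paper's proof: the same covering argument giving $\sum_j P_M(A_j)\ge 1$, the same observation that $A_j$ is determined by the first $T_{j-1}$ observations so that data processing applies, and the same KL calibration from \eqref{eq.Delta_j} showing the per-batch deviation $|p_j-P_M(A_j)|$ is $O(1/M)$. The only (immaterial) difference is that you bound the total variation via Pinsker's inequality where the paper uses its Lemma \ref{lemma.TV_KL} bound $\mathsf{TV}\le\sqrt{1-\exp(-D_{\mathrm{KL}})}$; both yield the needed $O(1/M)$ estimate.
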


The detailed proofs of Lemma \ref{lemma.pj_large} and Lemma \ref{lemma.pj_small} are deferred to the supplementary materials, and we only sketch the ideas here. Lemma \ref{lemma.pj_large} states that, if any of the events $A_j$ occurs with a non-small probability in the respective $j$-th \emph{world} (i.e., under the mixture of $(P_{j,k}: k\in [K-1])$ or $P_M$), then the policy $\pi$ has a large regret in the worst case. The intuition behind Lemma \ref{lemma.pj_large} is that, if the event $t_{j-1}\le T_{j-1}$ occurs under the reward distribution $P_{j,k}$, then the observations in the first $(j-1)$ batches are not sufficient to distinguish $P_{j,k}$ from its (carefully designed) perturbed version with size of perturbation $\Delta_j$. Furthermore, if in addition $t_j\ge T_j$ holds, then the total regret is at least $\Omega(T_j\Delta_j)$ due to the indistinguishability of the $\Delta_j$ perturbations in the first $j$ batches. Hence, if $A_j$ occurs with a fairly large probability, the resulting total regret will be large as well. 

Lemma \ref{lemma.pj_small} complements Lemma \ref{lemma.pj_large} by stating that at least one $p_j$ should be large. Note that if all $p_j$ were defined in the same world, the partition structure of $A_1,\cdots,A_M$ would imply $\sum_{j\in [M]} p_j\ge 1$. Since the occurrence of $A_j$ cannot really help to distinguish the $j$-th world with later ones, Lemma \ref{lemma.pj_small} shows that we may still operate in the same world and arrive at a slightly smaller constant than $1$. 

Finally we show how Lemma \ref{lemma.pj_large} and Lemma \ref{lemma.pj_small} imply Theorem \ref{thm.lower_bound_datadriven}. In fact, by Lemma \ref{lemma.pj_small}, there exists some $j\in [M]$ such that $p_j \ge (2M)^{-1}$. Then by Lemma \ref{lemma.pj_large} and the arbitrariness of $\pi$, we arrive at the desired lower bound in Theorem \ref{thm.lower_bound_datadriven}. 

\section{Experiments}\label{sec.exp}
This section contains some experimental results on the performances of BaSE policy under different grids. The default parameters are $T = 5\times 10^4, K=3, M=3$ and $\gamma=1$, and the mean reward is $\mu^\star = 0.6$ for the optimal arm and is $\mu = 0.5$ for all other arms. In addition to the minimax and geometric grids, we also experiment on the arithmetic grid with $t_j = jT/M$ for $j\in [M]$. Figure \ref{figure} (a)-(c) display the empirical dependence of the average BaSE regrets under different grids, together with the comparison with the centralized UCB1 algorithm \cite{auer2002finite} without any batch constraints. We observe that the minimax grid typically results in a smallest regret among all grids, and $M=4$ batches appear to be sufficient for the BaSE performance to approach the centralized performance. We also compare our BaSE algorithm with the ETC algorithm in \cite{perchet2016batched} for the two-arm case, and Figure \ref{figure} (d) shows that BaSE achieves lower regrets than ETC. The source codes of the experiment can be found in \url{https://github.com/Mathegineer/batched-bandit}. 

\begin{figure}[h]
	\centering
	\begin{subfigure}[b]{0.43\textwidth}
		\includegraphics[width=\linewidth]{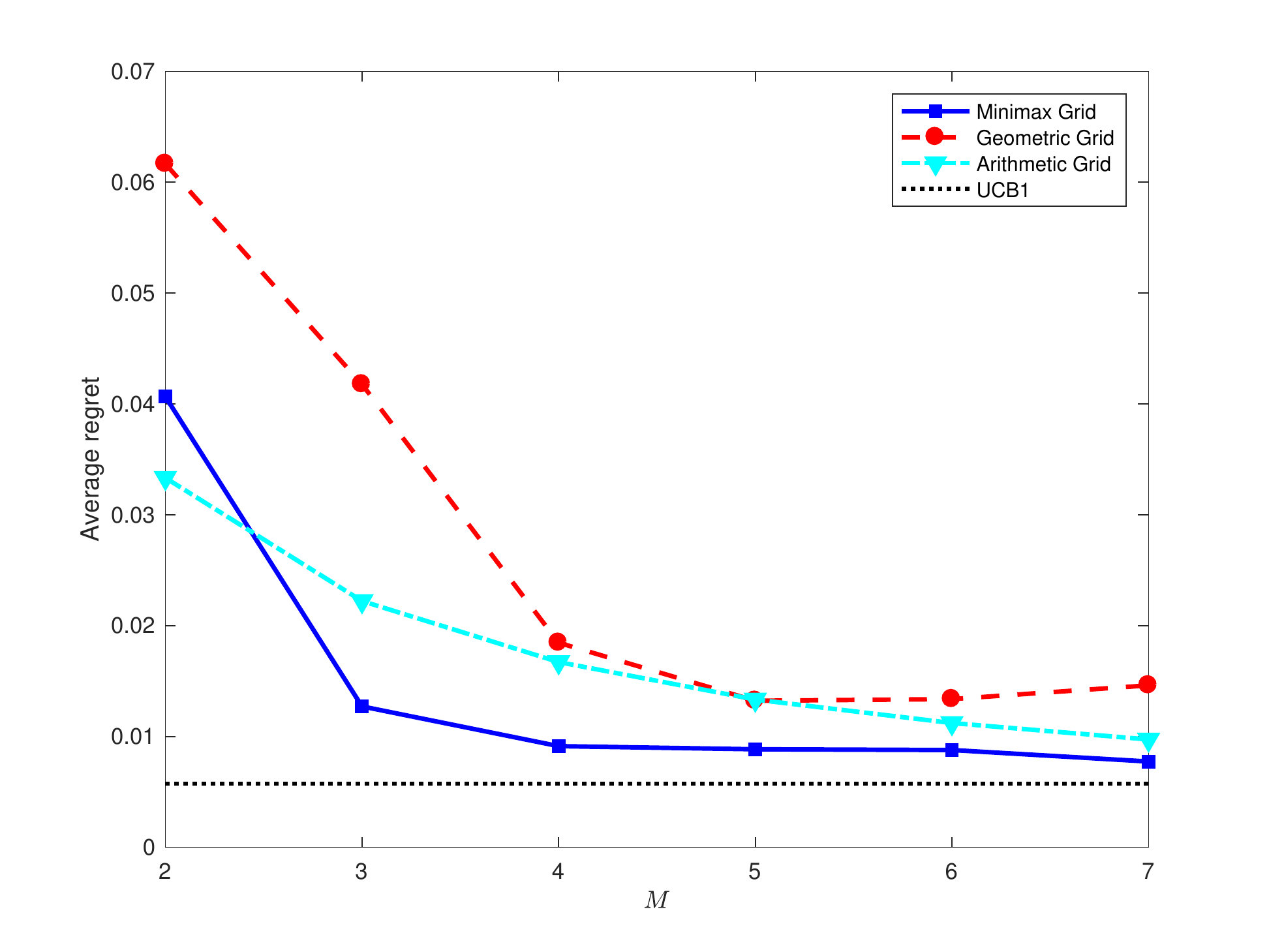}\caption{Average regret vs. the number of batches $M$.}
	\end{subfigure}
	\begin{subfigure}[b]{0.43\textwidth}
		\includegraphics[width=\linewidth]{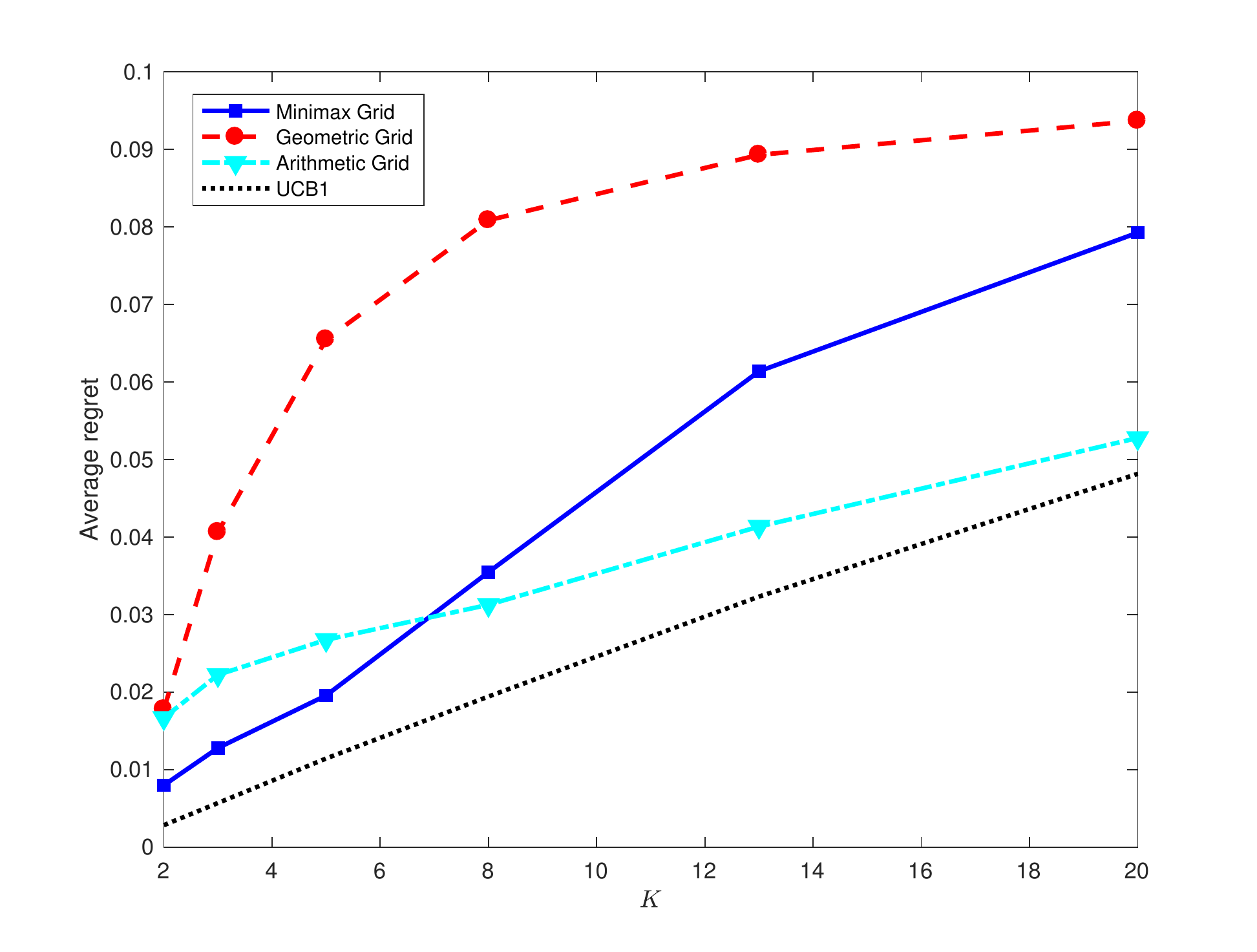}\caption{Average regret vs. the number of arms $K$.}
	\end{subfigure}
	\begin{subfigure}[b]{0.43\textwidth}
	\includegraphics[width=\linewidth]{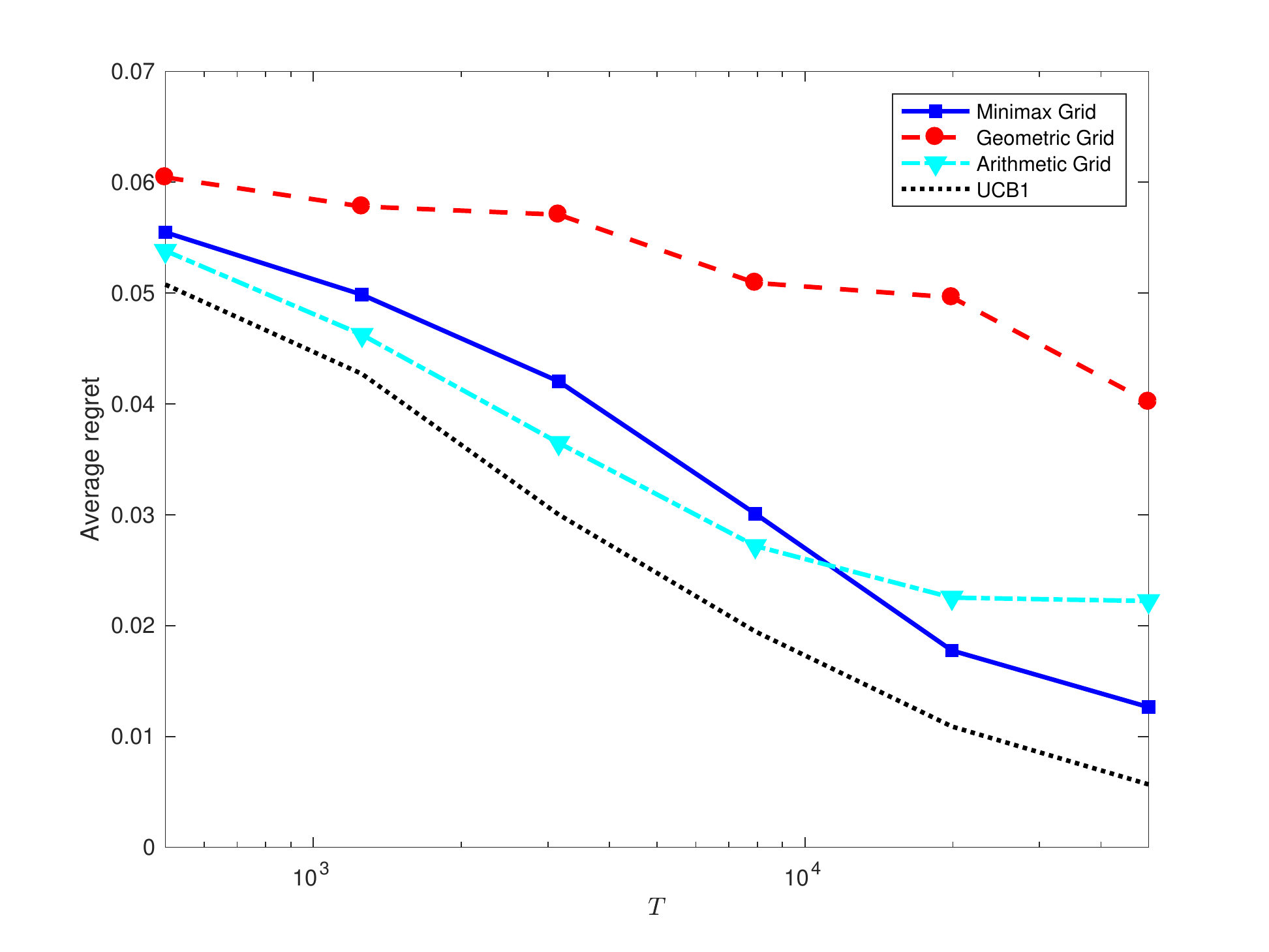}\caption{Average regret vs. the time horizon $T$.}
\end{subfigure}
\begin{subfigure}[b]{0.43\textwidth}
	\includegraphics[width=\linewidth]{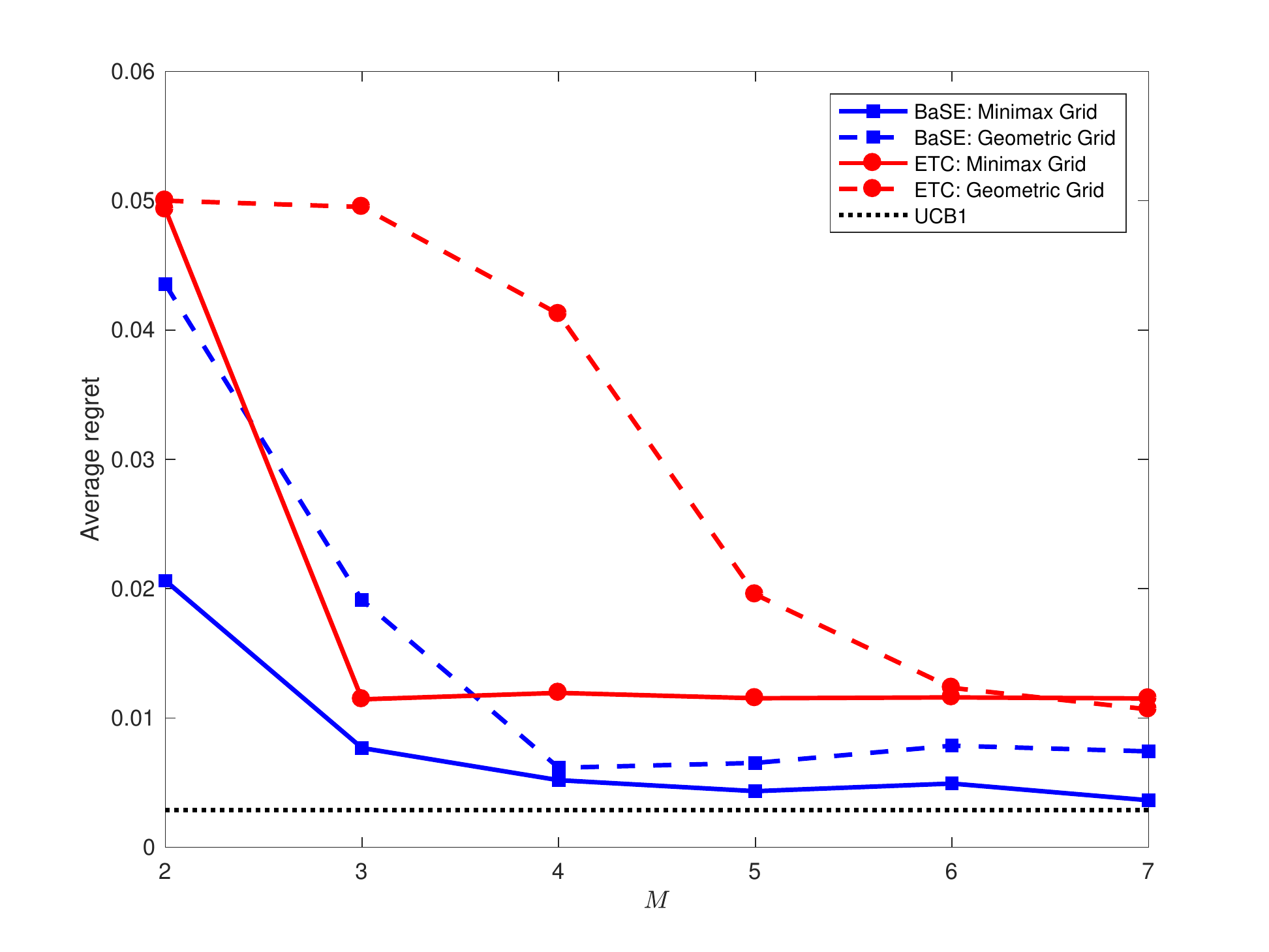}\caption{Comparison of BaSE and ETC.}
\end{subfigure}
\caption{Empirical regret performances of the BaSE policy.}\label{figure}
\end{figure}

\clearpage
\bibliographystyle{alpha}
\bibliography{di}

\clearpage 
\appendix 
\section{Auxiliary Lemmas}
The following lemma is a generalization of \cite[Lemma 2.6]{Tsybakov2008}. 
\begin{lemma}  \label{lemma.TV_KL}
	Let $P$ and $Q$ be any probability measures on the same probability space. Then
	\begin{align*}
	\mathsf{TV}(P,Q) \le \sqrt{1-\exp(-D_{\text{\rm KL}}(P\|Q) )} \le 1 - \frac{1}{2}\exp\left(-D_{\text{\rm KL}}(P\|Q)\right). 
	\end{align*}
\end{lemma}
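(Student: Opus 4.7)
The plan is to prove the stated chain of inequalities by routing through the squared Hellinger distance $H^2(P,Q) = 1 - \int \sqrt{dP\, dQ}$ as an intermediate quantity, which gives the cleanest path from total variation to KL. This is essentially the Bretagnolle--Huber inequality (the first bound), strengthened by an elementary concavity estimate on $\sqrt{1-x}$ (the second bound). Throughout I would assume $P \ll Q$, since otherwise $D_{\text{KL}}(P\|Q) = \infty$ and both inequalities become trivial.

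First I would establish the Cauchy--Schwarz-type bound
$$\mathsf{TV}(P,Q)^2 \le 1 - \bigl(1 - H^2(P,Q)\bigr)^2.$$
Writing $p, q$ for densities of $P, Q$ with respect to a common dominating measure $\mu$, I factor $|p - q| = |\sqrt{p} - \sqrt{q}|\cdot(\sqrt{p} + \sqrt{q})$ and apply Cauchy--Schwarz to $2\mathsf{TV}(P,Q) = \int |p-q|\, d\mu$, using the identities $\int(\sqrt{p}-\sqrt{q})^2 d\mu = 2H^2$ and $\int(\sqrt{p}+\sqrt{q})^2 d\mu = 4 - 2H^2$. This yields $4\mathsf{TV}^2 \le 2H^2\cdot(4 - 2H^2)$, which rearranges to the display above.

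Second, I would derive the Jensen-type bound
$$\bigl(1 - H^2(P,Q)\bigr)^2 \ge \exp\bigl(-D_{\text{KL}}(P\|Q)\bigr)$$
by rewriting $1 - H^2 = \int \sqrt{q/p}\, dP = \mathbb{E}_P\bigl[\exp(\tfrac{1}{2}\log(q/p))\bigr]$ and applying Jensen's inequality to the convex function $\exp(\cdot)$. This gives $1 - H^2 \ge \exp\bigl(-\tfrac{1}{2} D_{\text{KL}}(P\|Q)\bigr)$, and squaring yields the claim. Combining with the first bound produces the first inequality $\mathsf{TV}(P,Q) \le \sqrt{1 - \exp(-D_{\text{KL}}(P\|Q))}$.

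For the second inequality I would apply the elementary fact $\sqrt{1-x} \le 1 - x/2$ for $x \in [0,1]$, which follows by squaring both sides and noting $(1 - x/2)^2 - (1-x) = x^2/4 \ge 0$; plugging in $x = \exp(-D_{\text{KL}}(P\|Q)) \in (0,1]$ completes the proof. The main obstacle here is conceptual rather than technical: this argument is essentially the classical Bretagnolle--Huber bound combined with a one-line concavity estimate, so the only care needed is in the Jensen step, where I should note the standard convention $0\log 0 = 0$ and the almost-sure validity of $q/p$ under $P$ when $P \ll Q$.
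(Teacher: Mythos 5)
Your proof is correct and follows essentially the same route as the paper, which invokes the proof of Tsybakov's Lemma 2.6: both arguments reduce to the identical chain $1-\mathsf{TV}(P,Q)^2 \ge \bigl(\int\sqrt{dP\,dQ}\bigr)^2 \ge \exp(-D_{\text{KL}}(P\|Q))$, with the second step being your Jensen argument and the first differing only in the choice of Cauchy--Schwarz factorization (you split $|p-q|=|\sqrt{p}-\sqrt{q}|(\sqrt{p}+\sqrt{q})$, Tsybakov splits $\sqrt{pq}=\sqrt{\min\{p,q\}\max\{p,q\}}$ and uses $\int\min = 1-\mathsf{TV}$, $\int\max = 1+\mathsf{TV}$). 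The final step $\sqrt{1-x}\le 1-x/2$ is identical.
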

\begin{proof}
	Observe that the proof of \cite[Lemma 2.6]{Tsybakov2008} gives
	\begin{align*}
	\left(\int \min\{dP,dQ\}\right) \left(\int \max\{dP,dQ\}\right) \ge \exp\left(-D_{\text{KL}}(P\|Q)\right). 
	\end{align*}
	Since
	\begin{align*}
	\int \min\{dP,dQ\} &= 1 - \mathsf{TV}(P,Q), \\
	\int \max\{dP,dQ\} &= 1 + \mathsf{TV}(P,Q),
	\end{align*}
	the first inequality follows. The second inequality follows from the basic inequality $\sqrt{1-x} \le 1-x/2$ for any $x\in [0,1]$. 
\end{proof}

The following lemma presents a graph-theoretic inequality, which is the crux of \cref{lemma.testing_lower_bound}. 
\begin{lemma}\label{lemma.majorization}
	Let $T=(V,E)$ be a tree on $V=[n]$, and $x\in \bR^n$ be any vector. Then
	\begin{align*}
	\sum_{i=1}^n x_i - \max_{i\in [n]} x_i \ge \sum_{(i,j)\in E} \min\{x_i, x_j\}. 
	\end{align*} 
\end{lemma}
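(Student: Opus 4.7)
The plan is to exploit the fact that a tree on $n$ vertices has exactly $n-1$ edges, which matches the number of terms on the left-hand side after subtracting the maximum. The trick is to set up a bijection between non-maximum vertices and edges so that each edge contribution can be charged to a vertex.

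First, I would pick any index $i^\star \in \arg\max_{i\in [n]} x_i$ and root the tree $T$ at $i^\star$. Since $T$ is a tree, every vertex $v\neq i^\star$ has a unique parent $p(v)$, and the map $v \mapsto (v,p(v))$ is a bijection between $[n]\setminus\{i^\star\}$ and the edge set $E$. This reindexes the right-hand side as a sum over the $n-1$ non-root vertices.

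Next I would apply the elementary bound $\min\{x_v, x_{p(v)}\} \le x_v$ to each term, obtaining
\begin{align*}
\sum_{(i,j)\in E} \min\{x_i, x_j\} = \sum_{v \neq i^\star} \min\{x_v, x_{p(v)}\} \le \sum_{v \neq i^\star} x_v = \sum_{i=1}^n x_i - \max_{i\in [n]} x_i,
\end{align*}
which is exactly the claimed inequality.

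There is essentially no hard step here: the only conceptual ingredient is the choice of rooting at the maximum vertex, which ensures that the crude estimate $\min\{x_v,x_{p(v)}\}\le x_v$ already sums up to the right quantity. The bijection between non-root vertices and edges, which holds for any tree, does the rest of the work.
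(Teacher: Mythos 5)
Your proof is correct, and it takes a genuinely different and more elementary route than the paper's. The paper sorts the coordinates as $x_1\le\cdots\le x_n$ and runs a layer-cake/Abel-summation argument: it writes $\sum_{i=1}^{n-1}x_i$ as $(n-1)x_1+\sum_{k=2}^{n-1}(n-k)(x_k-x_{k-1})$ and bounds each count $n-k$ from below by the number of edges with $\min\{x_i,x_j\}\ge x_k$, using the fact that the restriction of a tree to $\{k,\dots,n\}$ is acyclic and that $|E|=n-1$. Your argument instead roots the tree at a maximizer $i^\star$, uses the bijection $v\mapsto(v,p(v))$ between non-root vertices and edges, and applies the trivial bound $\min\{x_v,x_{p(v)}\}\le x_v$ termwise; summing immediately gives $\sum_{i}x_i-\max_i x_i$. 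Your route is shorter, avoids the sorting and the implicit handling of ties in the indicator $\mathbbm{1}(\min\{x_i,x_j\}\ge x_k)$, and makes it transparent when equality holds (e.g., for a star centered at the maximizer, where every edge's minimum is attained at the leaf). What the paper's majorization-style argument buys is a slightly more ``global'' viewpoint that isolates the combinatorial content in the counting bound $|\{(i,j)\in E:i\ge k,\ j\ge k\}|\le n-k$, but for this lemma your charging argument is the cleaner proof.
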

\begin{proof}
	Without loss of generality we assume that $x_1\le x_2\le \cdots\le x_n$. For any $k\in [n-1]$, we have
	\begin{align*}
	\sum_{(i,j)\in E} \mathbbm{1}(\min\{x_i,x_j\}\ge x_k) = |\{(i,j)\in E: i\ge k, j\ge k \}| \le n-k,
	\end{align*}
	where the last inequality is due to the fact that restricting the tree $T$ on the vertices $\{k,k+1,\cdots,n\}$ is still acyclic. Hence, 
	\begin{align*}
	\sum_{i=1}^n x_i - \max_{i\in [n]} x_i = \sum_{i=1}^{n-1} x_i &= (n-1)x_1 + \sum_{k=2}^{n-1} (n-k)(x_{k}-x_{k-1}) \\
	&\ge (n-1)x_1 + \sum_{k=2}^{n-1} (x_{k}-x_{k-1})\sum_{(i,j)\in E} \mathbbm{1}(\min\{x_i,x_j\}\ge x_k) \\
	&= \sum_{(i,j)\in E}\left(x_1 + \sum_{k=2}^{n-1}(x_k-x_{k-1}) \mathbbm{1}(\min\{x_i,x_j\}\ge x_k)\right) \\
	&= \sum_{(i,j)\in E} \min\{x_i,x_j\},
	\end{align*}
	where we have used that $|E|=n-1$ for any tree.  
\end{proof}

\section{Proof of Main Lemmas}
\subsection{Proof of Lemma \ref{lemma.goodevents}}
Recall that the event $E$ is defined as $E=(\cap_{i=1}^K A_i)\cap B$. First we prove that $\bP(B^c)$ is small. Observe that if the optimal arm $\star$ is eliminated by arm $i$ at time $t$, then before time $t$ both arms are pulled the same number of times $\tau$. For any fixed realization of $\tau$, this occurs with probability at most
\begin{align*}
\bP\left(\calN(-\Delta_i, 2\tau^{-1}) \ge \sqrt{\frac{\gamma \log(TK)}{\tau}}\right) \le \bP \left(\calN(0, 2\tau^{-1}) \ge \sqrt{\frac{\gamma\log(TK)}{\tau}}\right) \le \frac{1}{(TK)^3}. 
\end{align*}
As a result, by the union bound, 
\begin{align}\label{eq.prob_Bc}
\bP(B^c) \le \sum_{i=1}^K \sum_{t=1}^T \sum_{1 \le \tau \le T} \bP\left(\text{arm }\star\text{ is eliminated by arm }i\text{ at time }t\text{ with }\tau\text{ pulls}\right) \le \frac{1}{TK}.
\end{align}

Next we upper bound $\bP(B\cap A_i^c)$ for any $i\in [K]$. Note that the event $B\cap A_i^c$ implies that the optimal arm $\star$ does not eliminate arm $i$ at time $t_{m_i}\in\calT$, where both arms have been pulled $\tau \ge \tau_i^\star$ times. By the definition of $\tau_i^\star$, this implies that
\begin{align*}
\Delta_i \ge 2\sqrt{\frac{\gamma\log(TK)}{\tau}}. 
\end{align*}
Hence, for any fixed realizations $t_{m_i}$ and $\tau$, this event occurs with probability at most
\begin{align*}
\bP\left(\calN(\Delta_i, 2\tau^{-1}) \le \sqrt{\frac{\gamma \log(TK)}{\tau}}\right) \le \bP\left(\calN(0, 2\tau^{-1}) \le -\sqrt{\frac{\gamma \log(TK)}{\tau}}\right) \le \frac{1}{(TK)^3}. 
\end{align*}
Therefore, by a union bound, 
\begin{align}
\bP(B\cap A_i^c) &\le \sum_{t_{m_i}\in \calT} \sum_{1\le \tau\le T} \bP(\text{arm }\star\text{ does not eliminate arm }i\text{ at time }t_{m_i}\in\calT\text{ with }\tau \text{ pulls}) \nonumber \\
&\le \frac{1}{TK^2}. \label{eq.prob_BcapAc}
\end{align}

Combining \eqref{eq.prob_Bc} and \eqref{eq.prob_BcapAc}, we conclude that
\begin{align*}
\bP(E^c) \le \bP(B^c) + \sum_{i=1}^K \bP(B\cap A_i^c) \le \frac{2}{TK}.
\end{align*}

\subsection{Deferred proof of Theorem \ref{thm.upper_bound_BaSE}}
The regret analysis of the policy $\pi_{\text{BaSE}}^2$ under the geometric grid is analogous to Section \ref{subsec.upper_bound}. Partition the arms $\calI=\calI_0 \cup \calI_1 \cup \calI_2$ as before, and let $\Delta = \min\{\Delta_i: i\in [K], \Delta_i>0\}$ be the smallest gap. We treat $\calI_0, \calI_1$ and $\calI_2$ separately. 
\begin{enumerate}
	\item The total regret incurred by arms in $\calI_0$ is at most
	\begin{align}\label{eq.adaptive_I0}
	b\cdot \max_{i\in [K]}\Delta_i = O(b\sqrt{K}) = O\left(\frac{bK}{\Delta}\right). 
	\end{align}
	\item The total regret incurred by pulling an arm $i\in\calI_1$ is at most
	\begin{align*}
	\Delta_i\cdot \frac{t'_{m_i}}{K-\sigma_i} \le \frac{1}{\Delta}\cdot \frac{t'_{m_i}\Delta_i^2}{K-\sigma_i} \le \frac{2b}{\Delta}\cdot  \frac{t'_{m_i-1}\Delta_i^2}{K-\sigma_i} \le \frac{2b}{\Delta}\cdot \frac{4\gamma K\log(KT)}{K-\sigma_i},
	\end{align*} 
	where for the last inequality we have used the definition of $m_i$. Using a similar argument for $(\sigma_i: i\in \calI_1)$ as in Section \ref{subsec.upper_bound}, the total regret incurred by pulling arms in $\calI_2$ is at most
	\begin{align}\label{eq.adaptive_I1}
	\sum_{i\in\calI_1}\frac{2b}{\Delta}\cdot \frac{4\gamma K\log(TK)}{K-\sigma_i} \le \frac{8\gamma bK\log K\log (KT)}{\Delta}. 
	\end{align}
	\item The total regret incurred by pulling an arm $i\in \calI_2$ (which is pulled $T_i$ times) is at most
	\begin{align*}
	\Delta_iT_i \le \frac{\Delta_i^2T_i}{\Delta} \le \frac{4\gamma K\log(TK)}{\Delta}\cdot \frac{T_i}{t'_{M-1}} \le \frac{8\gamma bK\log(TK)}{\Delta}\cdot \frac{T_i}{T},
	\end{align*}
	and thus the total regret by pulling arms in $\calI_2$ is at most
	\begin{align}\label{eq.adaptive_I2}
	\sum_{i\in\calI_2} \frac{8\gamma bK\log(TK)}{\Delta}\cdot \frac{T_i}{T} \le  \frac{8\gamma bK\log(TK)}{\Delta}. 
	\end{align}
\end{enumerate}

Now combining \eqref{eq.adaptive_I0} to \eqref{eq.adaptive_I2} together with the inequality \eqref{eq.regret_badevent} and the choice of $b$ in \eqref{eq.choice_ab}, we arrive at the desired upper bound \eqref{eq.BaSE_regret_adaptive}. 

\subsection{Proof of Lemma \ref{lemma.testing_lower_bound}}
It is easy to show that the minimizer of $\frac{1}{n}\sum_{i=1}^n Q_i(\Psi\neq i)$ is $\Psi^\star(\omega) = \arg\max_{i\in [n]} Q_i(d\omega)$, and thus
\begin{align*}
\frac{1}{n}\sum_{i=1}^n Q_i(\Psi\neq i) \ge 1 - \frac{1}{n}\int \max\{dQ_1,dQ_2,\cdots,dQ_n\} = \frac{1}{n}\int \left[\sum_{i=1}^n dQ_i - \max_{i\in [n]}dQ_i\right]. 
\end{align*}
By Lemmas \ref{lemma.TV_KL} and \ref{lemma.majorization}, we further have
\begin{align*}
\frac{1}{n}\sum_{i=1}^n Q_i(\Psi\neq i) &\ge \sum_{(i,j)\in E}\frac{1}{n}\int \min\{dQ_i,dQ_j\} \\
&= \sum_{(i,j)\in E}\frac{1-\mathsf{TV}(Q_i,Q_j)}{n} \\
&\ge \sum_{(i,j)\in E} \frac{1}{2n}\exp(-D_{\text{\rm KL}}(Q_i\|Q_j)),
\end{align*}
as claimed. 

\subsection{Proof of Lemma \ref{lemma.pj_large}}
The proof of Lemma \ref{lemma.pj_large} relies on the reduction of the minimax lower bound to multiple hypothesis testing. Without loss of generality we assume that $j\in [M-1]$; the case where $j=M$ is analogous. For any $k\in [K-1]$, consider the following family $\calP_{j,k}=(Q_{j,k,\ell})_{\ell\in [K]}$ of reward distributions: define $Q_{j,k,k}=P_{j,k}$, and for $\ell \neq k$, let $Q_{j,k,\ell}$ be the modification of $P_{j,k}$ where the quantity $3\Delta_j$ is added to the mean of the $\ell$-th component of $P_{j,k}$. We have the following observations: 
\begin{enumerate}
	\item For each $\ell \in [K]$, arm $\ell$ is the optimal arm under reward distribution $Q_{j,k,\ell}$; 
	\item For each $\ell \in [K]$, pulling an arm $\ell'\neq \ell$ incurs a regret at least $\Delta_j$ under reward distribution $Q_{j,k,\ell}$; 
	\item For each $\ell \neq k$, the distributions $Q_{j,k,\ell}$ and $Q_{j,k,k}$ only differ in the $\ell$-th component. 
\end{enumerate}	

By the first two observations, similar arguments in \eqref{eq.bayes_lower_bound} yield to
\begin{align*}
\sup_{\{\mu^{(i)}\}_{i=1}^K: \Delta_i\le \sqrt{K} } \bE[R_T(\pi)] \ge \Delta_j \sum_{t=1}^T \frac{1}{K}\sum_{\ell=1}^K Q_{j,k,\ell}^t(\pi_t \neq \ell), 
\end{align*}
where $Q_{j,k,\ell}^t$ denotes the distribution of observations available at time $t$ under reward distribution $Q_{j,k,\ell}$, and $\pi_t$ denotes the policy at time $t$. We lower bound the above quantity as 
\begin{align}
\sup_{\{\mu^{(i)}\}_{i=1}^K: \Delta_i\le \sqrt{K} } \bE[R_T(\pi)]  &\stepa{\ge} \Delta_j \sum_{t=1}^T \frac{1}{K }\sum_{\ell\neq k} \int \min\{dQ_{j,k,k}^t, dQ_{j,k,\ell}^t\} \nonumber \\
&\ge \Delta_j \sum_{t=1}^{T_j} \frac{1}{K}\sum_{\ell \neq k} \int \min\{dQ_{j,k,k}^t, dQ_{j,k,\ell}^t\} \nonumber \\
&\stepb{\ge} \Delta_j T_j\cdot \frac{1}{K}\sum_{\ell \neq k} \int \min\{dQ_{j,k,k}^{T_j}, dQ_{j,k,\ell}^{T_j}\}  \nonumber \\
&\ge  \Delta_j T_j\cdot \frac{1}{K}\sum_{\ell \neq k} \int_{A_j} \min\{dQ_{j,k,k}^{T_j}, dQ_{j,k,\ell}^{T_j}\} \nonumber \\
&\stepc{=} \Delta_j T_j\cdot \frac{1}{K}\sum_{\ell \neq k} \int_{A_j} \min\{dQ_{j,k,k}^{T_{j-1}}, dQ_{j,k,\ell}^{T_{j-1}}\}, \label{eq.TV_Aj} 
\end{align}
where (a) follows by the proof of Lemma \ref{lemma.testing_lower_bound} and considering a star graph on $[K]$ with center $k$, and (b) is due to the identity $\int\min\{dP,dQ\}=1-\mathsf{TV}(P,Q)$ and the data processing inequality of the total variation distance, and for step (c) we note that when $A_j=\{t_{j-1}<T_{j-1}, t_j\ge T_j\}$ holds, the observations seen by the policy at time $T_j$ are the same as those seen at time $T_{j-1}$. To lower bound the final quantity, we further have
\begin{align}\label{eq.data_processing}
\int_{A_j} \min\{dQ_{j,k,k}^{T_{j-1}}, dQ_{j,k,\ell}^{T_{j-1}}\} &= \int_{A_j} \frac{dQ_{j,k,k}^{T_{j-1}}+dQ_{j,k,\ell}^{T_{j-1}}-|dQ_{j,k,k}^{T_{j-1}}- dQ_{j,k,\ell}^{T_{j-1}}|}{2} \nonumber \\
&= \frac{Q_{j,k,k}^{T_{j-1}}(A_j) + Q_{j,k,\ell}^{T_{j-1}}(A_j)}{2} - \frac{1}{2}\int_{A_j} |dQ_{j,k,k}^{T_{j-1}}- dQ_{j,k,\ell}^{T_{j-1}}| \nonumber\\
&\stepd{\ge} \left( Q_{j,k,k}^{T_{j-1}}(A_j) - \frac{1}{2}\mathsf{TV}(Q_{j,k,k}^{T_{j-1}}, Q_{j,k,\ell}^{T_{j-1}} ) \right) - \mathsf{TV}(Q_{j,k,k}^{T_{j-1}}, Q_{j,k,\ell}^{T_{j-1}} ) \nonumber \\
&\stepe{=} P_{j,k}(A_j) - \frac{3}{2}\mathsf{TV}(Q_{j,k,k}^{T_{j-1}}, Q_{j,k,\ell}^{T_{j-1}} ), 
\end{align}
where (d) follows from $|P(A) - Q(A)| \le \mathsf{TV}(P,Q)$, and in (e) we have used the fact that the event $A_j$ can be determined by the observations up to time $T_{j-1}$ (and possibly some external randomness). Also note that
\begin{align}\label{eq.average_TV}
\frac{1}{K}\sum_{\ell \neq k}\mathsf{TV}(Q_{j,k,k}^{T_{j-1}}, Q_{j,k,\ell}^{T_{j-1}} ) &\le \frac{1}{K}\sum_{\ell \neq k} \sqrt{1- \exp(-D_{\text{KL}}(Q_{j,k,k}^{T_{j-1}} \| Q_{j,k,\ell}^{T_{j-1}}) )} \nonumber\\
&=  \frac{1}{K}\sum_{\ell \neq k} \sqrt{1- \exp\left(-\frac{9\Delta_j^2\bE_{P_{j,k}}[\tau_\ell] }{2} \right)} \nonumber\\
&\le \frac{K-1}{K}\sqrt{1- \exp\left(-\frac{9\Delta_j^2 }{2(K-1)}\sum_{\ell\neq k}\bE_{P_{j,k}}[\tau_\ell] \right)} \nonumber\\
&\le \frac{K-1}{K}\sqrt{1- \exp\left(-\frac{9\Delta_j^2T_{j-1} }{2(K-1)} \right)} \le \frac{3}{\sqrt{K}}\cdot \sqrt{\Delta_j^2T_{j-1}} \le \frac{1}{12M},
\end{align}
where the first inequality is due to Lemma \ref{lemma.TV_KL}, the second equality evaluates the KL divergence with $\tau_\ell$ being the number of pulls of arm $\ell$ before time $T_{j-1}$, the third inequality is due to the concavity of $x\mapsto \sqrt{1-e^{-x}}$ for $x\ge 0$, the fourth inequality follows from $\sum_{\ell \neq k}\tau_\ell \le T_{j-1}$ almost surely, and the remaining steps follow from \eqref{eq.Delta_j} and simple algebra. 

Combining \eqref{eq.TV_Aj}, \eqref{eq.data_processing} and \eqref{eq.average_TV}, we conclude that
\begin{align*}
\sup_{\{\mu^{(i)}\}_{i=1}^K: \Delta_i\le \sqrt{K} } \bE[R_T(\pi)] \ge \Delta_j T_j \left(\frac{P_{j,k}(A)}{2} - \frac{1}{8M}\right) \ge \sqrt{K}T^{\frac{1}{2-2^{1-M}}}\cdot \frac{1}{72M}\left(\frac{P_{j,k}(A)}{2} - \frac{1}{8M}\right). 
\end{align*}
Note that the previous inequality holds for any $k\in [K-1]$, averaging over $k\in [K-1]$ yields 
\begin{align*}
\sup_{\{\mu^{(i)}\}_{i=1}^K: \Delta_i\le \sqrt{K} } \bE[R_T(\pi)] &\ge  \sqrt{K}T^{\frac{1}{2-2^{1-M}}}\cdot \frac{1}{72M}\left(\frac{1}{2(K-1) }\sum_{k=1}^{K-1} P_{j,k}(A) - \frac{1}{8M}\right) \\
&\ge \frac{1}{576M^2}\cdot  \sqrt{K}T^{\frac{1}{2-2^{1-M}}},
\end{align*}
where in the last step we have used that $p_j \ge \frac{1}{2M}$. Hence, the proof of Lemma \ref{lemma.pj_large} is completed.

\subsection{Proof of Lemma \ref{lemma.pj_small}}
Recall that the event $A_j$ can be determined by the observations up to time $T_{j-1}$ (and possibly some external randomness), the data-processing inequality gives
\begin{align*}
|P_M(A_j) - P_{j,k}(A_j)| \le \mathsf{TV}(P_M^{T_{j-1}}, P_{j,k}^{T_{j-1}}). 
\end{align*}

Note that each $P_{j,k}$ only differs from $P_M$ in the $k$-th component with mean difference $\Delta_j+\Delta_M$, the same arguments in \eqref{eq.average_TV} yield
\begin{align*}
\frac{1}{K-1}\sum_{k=1}^{K-1}\mathsf{TV}(P_M^{T_{j-1}}, P_{j,k}^{T_{j-1}}) &\le \frac{1}{K-1}\sum_{k=1}^{K-1}\sqrt{1- \exp(-D_{\text{KL}}(P_M^{T_{j-1}} \| P_{j,k}^{T_{j-1}}) )} \\
&= \frac{1}{K-1}\sum_{k=1}^{K-1}\sqrt{1- \exp\left(-\frac{(\Delta_j+\Delta_M)^2}{2}\bE_{P_M}[\tau_k] \right)} \\
&\le \sqrt{1 - \exp\left(-\frac{2\Delta_j^2}{K-1}\bE_{P_M}\left[\sum_{k=1}^{K-1}\tau_k\right] \right)} \\
&\le \sqrt{1 - \exp\left(-\frac{2\Delta_j^2T_{j-1}}{K-1} \right)} \le \frac{1}{2M}, 
\end{align*}
where we define $\tau_k$ to be the number of pulls of arm $k$ before the time $T_{j-1}$, and $\sum_{k=1}^{K-1}\tau_k\le T_{j-1}$ holds almost surely. The previous two inequalities imply that
\begin{align*}
|P_M(A_j) - p_j| \le \frac{1}{K-1}\sum_{k=1}^{K-1} |P_M(A_j) - P_{j,k}(A_j)| \le \frac{1}{2M},
\end{align*}
and consequently 
\begin{align}\label{eq.change_of_measure}
\sum_{j=1}^M p_j \ge P_M(A_M) + \sum_{j=1}^{M-1}\left( P_M(A_j) - \frac{1}{2M}\right) \ge \sum_{j=1}^M P_M(A_j) - \frac{1}{2}. 
\end{align}
Finally note that $\cup_{j=1}^M A_j$ is the entire probability space, we have $\sum_{j=1}^M P_M(A_j)\ge P_M(\cup_{j=1}^M A_j)=1$, and therefore \eqref{eq.change_of_measure} yields the desired inequality. 

\end{document}